\documentclass[aoas,preprint]{imsart}

\RequirePackage[OT1]{fontenc}
\RequirePackage{amsthm,amsmath}
\RequirePackage{natbib}
\RequirePackage[colorlinks,citecolor=blue,urlcolor=blue]{hyperref}

\usepackage{hyperref}
\usepackage{url}
\usepackage{wrapfig}

\usepackage{psfrag}
\usepackage{color}
\usepackage[ruled,vlined,linesnumbered]{algorithm2e}
\usepackage{algorithmic}
\usepackage{amsfonts}
\usepackage{amsmath}
\usepackage{amsthm}
\usepackage{mathrsfs}
\usepackage{amssymb}
\usepackage{wrapfig}
\usepackage{graphicx}
\usepackage{psfrag,subfigure}

\newcommand{\bbeta}{\boldsymbol{\beta}}

\newcommand{\etab}{\boldsymbol{\eta}}
\newcommand{\thetab}{\boldsymbol{\theta}}
\newcommand{\Thetab}{\boldsymbol{\Theta}}

\newcommand{\by}{\mathbf{y}}
\newcommand{\bX}{\mathbf{X}}
\newcommand{\bD}{\mathbf{D}}

\newcommand{\bx}{\mathbf{x}}
\newcommand{\br}{\mathbf{r}}

\newcommand{\bF}{\mathbf{F}}
\newcommand{\bg}{\mathbf{g}}
\newcommand{\bh}{\mathbf{h}}

\newcommand{\bo}{\mathbf{o}}

\newcommand{\bz}{\mathbf{z}}
\newcommand{\bv}{\mathbf{v}}
\newcommand{\bu}{\mathbf{u}}
\newcommand{\bw}{\mathbf{w}}
\newcommand{\bs}{\mathbf{s}}

\newcommand{\ba}{\mathbf{a}}
\def\bSig\mathbf{\Sigma}

\newcommand{\argmin}{\operatornamewithlimits{argmin}}
\newcommand\Ldots{\mathop{\lower.01ex\hbox{$\hdots$}}}

\newtheorem{theorem}{Theorem}

\usepackage{hyperref}
\usepackage{color}
\def\bSig\mathbf{\Sigma}

\startlocaldefs
\numberwithin{equation}{section}
\theoremstyle{plain}

\endlocaldefs

\begin{document}

\begin{frontmatter}
\title{Screening Rules  for Overlapping Group Lasso}
\runtitle{Screening Rules  for Overlapping Group Lasso}

\begin{aug}
\author{\fnms{Seunghak} \snm{Lee}} 
\and
\author{\fnms{Eric P.} \snm{Xing}}

\affiliation{Carnegie Mellon University}

\end{aug}

\begin{abstract}
Recently, to solve large-scale lasso and group lasso problems,
screening rules have been developed, the goal of which is
to reduce the problem size by efficiently discarding
zero coefficients using simple rules independently of the others.
However, 
screening for overlapping group lasso
remains an open challenge because
the  overlaps between groups
make it infeasible to test each group independently. 
In this paper, we develop screening rules for overlapping 
group lasso.
To address the challenge arising from  groups with overlaps, 
we
take into account  
overlapping groups only if they are 
inclusive of the group being tested, and
then we
derive 
screening rules, adopting the dual polytope projection approach.
This  strategy allows us to
screen each group independently of each other. 
In our experiments, 
we demonstrate the efficiency of our screening rules on  
various datasets.
\end{abstract}

\end{frontmatter}

\section{Introduction}
We propose efficient screening 
rules for  regression with the overlapping group lasso penalty.
Our goal is to develop simple rules to discard groups with zero coefficients in 
the optimization problem with the following form: 
\begin{equation}
\label{equ:general}
\min_{\bbeta} \frac{1}{2}
\lVert \by - \bX\bbeta  \rVert_2^2 
+ \lambda \sum_{\bg\in \mathcal{G}} \sqrt{n_\bg} \left\| \bbeta_{\bg} \right\|_2, 
\end{equation}
where $\bX \in \mathbb{R}^{N \times J}$ is the input data for  $J$ inputs and $N$ samples, 
$\by \in \mathbb{R}^{N \times 1}$ is the output vector, 
$\bbeta \in \mathbb{R}^{J \times 1}$ is the vector of regression coefficients, 
$n_\bg$ is the size of group $\bg$, and 
$\lambda$ is a regularization parameter that determines the sparsity of $\bbeta$.
In this setting, $\mathcal{G}$ represents a
set of groups of coefficients, defined {\it a priori}, and we allow 
arbitrary overlap between different groups, hence ``overlapping'' group lasso.
Overlapping group lasso is a general model that subsumes lasso \cite[]{tibshirani1996regression}, 
group lasso \cite[]{yuan2006model}, sparse group lasso \cite[]{simon2013sparse},
composite absolute penalties \cite[]{zhao2009composite}, and tree lasso \cite[]{zhao2009composite,kim2012tree} 
with $\ell_1/\ell_2$ penalty because they are a specific form of overlapping group lasso.

In this paper, we do not consider the  latent group lasso 
proposed by 
Jacob et al. \cite[]{jacob2009group}, 
where support is defined by the union of groups with nonzero coefficients. 
Instead, we consider the overlapping group lasso in the formulation of \eqref{equ:general}, 
where support is defined by the complement of the union of groups with zero coefficients
\cite[]{jenatton2011structured,yuan2011efficient}. 
Therefore, 
unlike the  latent group lasso,
simple conversion from \eqref{equ:general} to nonoverlapping group lasso problems
by duplicating  features overlapped between different groups
is infeasible.

Recently, to  solve \eqref{equ:general} efficiently, fast algorithms  have been developed 
\cite[]{yuan2011efficient,chen2012smoothing,deng2013group}
(we refer readers to \cite[]{bach2012optimization} for review of 
optimization with sparsity-inducing penalties); 
however, in many applications such as genome-wide association 
studies \cite[]{lee2012leveraging,yang2010identifying},
the number of features (or number of groups) can be very large. 
In such cases, fast optimization of \eqref{equ:general} is 
challenging because it requires us to sweep over all coefficients/groups of coefficients many times until the objective converges.
Furthermore, parallelization of existing sequential algorithms for speedup
is nontrivial.

The past years have seen the emergence of 
screening techniques that can
discard zero coefficients using simple rules in a single sweep 
over all coefficients.
Examples include Sasvi rules \cite[]{liu2013safe},
dual polytope projection (DPP) rules \cite[]{wang2013lasso}, 
dome tests \cite[]{xiang2012fast}, sphere tests \cite[]{xiang2011learning}, SAFE rules \cite[]{safefule2012}, 
and strong rules \cite[]{tibshirani2012strong}.
Among these, strong rules are not exact (true nonzero coefficients can be mistakenly discarded),
whereas the other tests are exact. 
Furthermore, 
Bonnefoy et al. \cite[]{bonnefoy2014} recently developed an approach that merges screening approach with first-order 
optimization algorithms for lasso;
however, to the best of our knowledge, none of the existing screening methods can be applied
to \eqref{equ:general} with overlapping groups. 
DPP and strong rules can be used for nonoverlapping group lasso, 
and the others are developed only for lasso.

\begin{figure} 
\begin{center}
\includegraphics[height=1.3in]{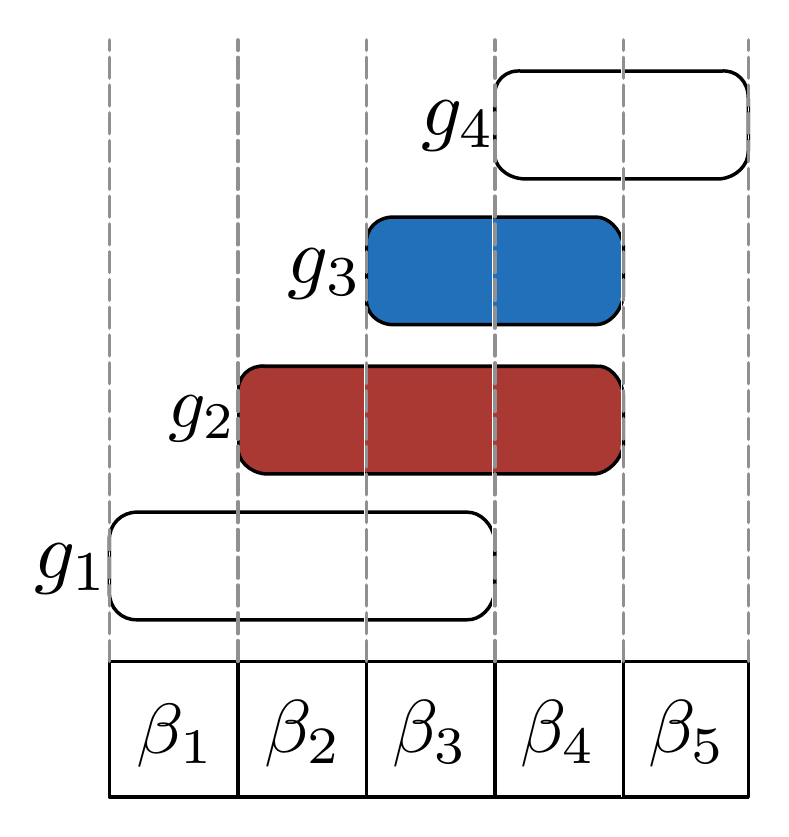}
\end{center}
\vspace{-0.4cm} 
\caption{\footnotesize 
For screening test on group $g_2$, we consider $g_3$ but disregard $g_1$ and $g_4$
to enable the independent screening test.}
\label{fig:ogroup_concept}
\vspace{-.4cm}
\end{figure}

In this paper, we develop exact screening rules for overlapping group lasso. 
The proposed screening rules can efficiently discard groups with zero coefficients 
by looking at each group independently. 
As a result, after screening, the number of groups that
potentially include nonzero coefficients is small, and 
thus we can reduce the size of  \eqref{equ:general}
by reformulating it
using only the groups that survived.
We then employ an optimization technique to solve the reduced problem.
The resultant solution is optimal because
the screening rules are exact in the sense that 
nonzero coefficients in a global optimal solution
are never mistakenly discarded.
The key idea behind our approach is to consider only groups that are inclusive of 
tested groups, while ignoring the other overlapping groups
to perform independent screening tests. For example, in Figure \ref{fig:ogroup_concept},
when performing a screening test on group $g_2$, we take into account only $g_3$ because $g_3$
is a subset of $g_2$, whereas $g_1$ and $g_4$ are non-inclusive of $g_2$.
The contributions of this paper are as follows:
\begin{enumerate}
\item We develop novel 
overlapping group lasso screening (OLS) 
and sparse overlapping group lasso screening (SOLS) rules.
Sparse overlapping group lasso is a special case of 
overlapping group lasso, as formulated in \eqref{equ:sparse_og_lasso}.
\item We show that 
the screening rule for nonoverlapping group lasso via DPP \cite[]{wang2013lasso}
(group DPP  or GDPP) 
is also  exact when applied to overlapping group lasso.
\end{enumerate}
In our experiments, we demonstrate that OLS, SOLS, and GDPP
give us significant speed-up against a solver without screening.
For example, OLS and SOLS achieved a $3.7\times$ and $3\times$ speed-up
on PIE image dataset,
compared to
an overlapping group lasso solver without screening. 
Furthermore,  OLS and SOLS are substantially more efficient to discard 
features with zero coefficients than 
GDPP under various experimental settings, confirming that 
the proposed algorithms are capable of using overlapping groups for screening.

\paragraph{Notation}
We refer matrices to boldface and uppercase letters;
vectors to boldface and lowercase letters; and
scalars to regular lowercase letters.
Columns are indexed by subscripts (e.g., $\bx_j$ is the $j$-th 
column vector of the matrix $\bX$).
We refer $\bg$ or $\bh$ to a group of coefficients, and
 $\bw_\bg$ represents a sub-vector of $\bw$, indexed by
$\bg$.

\section{Background: Screening Rules via DPP}

Recently, Wang et al. proposed screening rules via DPP
for nonoverlapping group lasso \cite[]{wang2013lasso}.
The DPP screening rules  are derived as follows: 
first, we find a dual form of group lasso and
its Karush-Kuhn-Tucker (KKT) conditions. 
Then, using the KKT conditions and the relationship between primal and dual 
solutions, we find screening rules to discard groups with zero coefficients;
however, such screening rules involve a dual optimal solution,  
which is unknown.
Thus,  the DPP approach finds a range of vectors
that includes a dual optimal solution, which is easy to obtain, 
and uses it instead of a dual optimal solution for screening rules.
Here we review the DPP  screening rule for nonoverlapping group lasso because it gives us with a vehicle to 
derive screening rules for overlapping group lasso. 

For the nonoverlapping group lasso, screening rules via DPP can be obtained by the following procedure:
\begin{enumerate}
\item Find KKT conditions for a dual  of the nonoverlapping group lasso.
\item Find the range of a dual optimal for the nonoverlapping
group lasso.
\item Derive screening rules by using the range of a dual optimal 
and the KKT conditions.
\end{enumerate}

The primal of the nonoverlapping group lasso is defined by
$$\min_{\bbeta} \frac{1}{2}\left\| \by - \bX \bbeta \right\|_2^2 + \lambda \sum_{\bg \in \mathcal{G}'} 
 \sqrt{n_\bg} \left\| \bbeta_{\bg} \right\|_2,$$ 
 where $\mathcal{G}'$ is a set of nonoverlapping groups,
 and $J = \sum_g n_\bg$.
We first represent the nonoverlapping group lasso in a dual form:
\begin{equation}
\sup_{\thetab} \left\{ \frac{1}{2} \left\| \by \right\|_2^2 -\frac{\lambda^2}{2} 
\left\| \thetab - \frac{\by}{\lambda} \right\|_2^2:  
\left\| \bX_{\bg}^T \thetab \right\|_2 \leq  \sqrt{n_\bg}, \, \forall 
\bg \in \mathcal{G}' \right\},
\label{eq:dual_gr_lasso}
\end{equation}
where $\thetab \in \mathbb{R}^{N \times 1}$ is a vector of dual variables.
In \eqref{eq:dual_gr_lasso}, one can see that its dual optimal 
$\thetab^*$ is a vector~\footnote{For simple notation, we denote $\thetab^*$ by 
a dual optimal solution given $\lambda$. 
We use notation $\thetab^*(\lambda)$ when we refer to a
specific $\lambda$.}
that is closest to $\frac{\by}{\lambda}$ among the ones that satisfy
all the constraints, and such a solution
can be obtained by projecting $ \frac{\by}{\lambda}$
onto the set of constraints $\bF' \equiv \left\{ \left\| \bX_{\bg}^T \thetab \right\|_2 \leq  \sqrt{n_\bg}, 
\, \forall  \bg \in \mathcal{G}' \right\} $.
In other words, $\thetab^* = P_{\bF'(\by / \lambda)}$, 
where $P_{\bF'}$ denotes the projection operator onto $\bF'$.

The  KKT conditions of \eqref{eq:dual_gr_lasso} \cite[]{wang2013lasso} are given by
\begin{equation}
\begin{aligned}
\by &= \bX \bbeta^* + \lambda \thetab^*,\\
\bX_{\bg}^T \thetab^*    &= 
  \begin{cases}
  \sqrt{n_\bg} \frac{ \bbeta_{\bg}^*}{\left\|  \bbeta_{\bg}^* \right\|_2}, & \text{ if } 
  \bbeta_{\bg}^* \neq {\bf 0}, \\
 \sqrt{n_\bg} \bu, \left\| \bu \right\|_2 \leq 1,     & \text{ if } \bbeta_{\bg}^* = {\bf 0}.
  \end{cases}
\end{aligned}
\label{eq:grlasso_kkt}
\end{equation}

Based on \eqref{eq:grlasso_kkt}, 
one can 
obtain a screening rule as follows:
if $\left\| \bX_{\bg}^T \thetab^*   \right\|_2 < \sqrt{n_\bg}$, then $\bbeta_{\bg}^* = {\bf 0}$.
However,  it is still unusable because the
dual optimal $\thetab^*$ is unknown.
To address this problem, we estimate a
range of vectors, denoted by $\Thetab$,
that contains
$\thetab^*$
 based on $\thetab^*(\lambda_0)$, where $\lambda_0 \neq \lambda$.
Specifically, to estimate $\Thetab$,
 we use the fact that 
 $ P_{\bF'}$ 
is continuous and nonexpansive.
Finally,  the DPP screening rule for nonoverlapping group lasso is formulated as follows: if
$\sup_{\thetab \in \Thetab} \left\| \bX_{\bg}^T \thetab   \right\|_2 < \sqrt{n_\bg}$,
then $\bbeta_{\bg}^* = {\bf 0}$. 
By finding a closed-form solution for the left-hand side of 
the rule, i.e., $\sup_{\thetab \in \Thetab} \left\| \bX_{\bg}^T \thetab   \right\|_2$,
we can obtain screening rules for nonoverlapping group lasso.

\section{Overlapping Group Lasso Screening}

Now we develop overlapping group lasso screening rules.
The first challenge 
is that the groups are not separable, making independent tests infeasible. 
Second, it is also unclear how to make use of overlapping groups
in screening rules. Intuitively, coefficients are more likely to 
be zero if they are involved in more
groups; thus incorporating
overlapping groups into screening rules can help discard more
features.

We address the first challenge by considering only a set of groups
that is a subset of the tested group.
Consider the example
in Figure \ref{fig:ogroup_concept}.
Suppose we want to test if $\bbeta_{\bg_2} = {\bf 0}$, $\bg_2 = \{2,3,4\}$ given
three other groups: $\bg_1 = \{1,2,3\}$, 
$\bg_3 = \{3,4\}$, and $\bg_4 = \{4,5\}$.
In such a case,  we consider only 
$\bg_3$ for the screening test on $\bg_2$ because $\bg_3$
is included in $\bg_2$, allowing us to 
test $\bg_2$ independent of other groups;
we ignore $\bg_1$ and $\bg_4$  in testing $\bg_2$
because they involve coefficients not included in $\bg_2$, preventing
us from testing the groups independently.
For the second challenge, we derive new screening rules
that can exploit the overlapping groups
for minimizing the left-hand side of screening rules. 
In other words, they 
discard 
more features as the number of overlapping groups, inclusive 
of a tested group,
increases.

In $\S$\ref{subset:screen_cond}, we start with  
a condition for $\bbeta_{\bg}^* = {\bf 0}$ for overlapping group lasso
that contains a dual optimal. 
Based on this condition, we derive a screening rule by replacing 
a dual optimal with a range that includes the dual optimal in $\S$\ref{subset:screen_ogroup}.
Finally, in $\S$\ref{subsec:screen_alg}, we present  efficient algorithms 
for overlapping group lasso screening based on the screening rules obtained
in $\S$\ref{subset:screen_ogroup}. 

\subsection{Screening Condition for Overlapping Group Lasso}
\label{subset:screen_cond}
Let us start with the dual form of overlapping group lasso (see appendix for derivation of the dual form):
\begin{align}
\label{eq:dual_formul}
& \sup_{\thetab}  \frac{1}{2} \left\| \by \right\|_2^2  -\frac{\lambda^2}{2} \left\| \thetab - \frac{\by}{\lambda} \right\|_2^2 \\
& \mbox{subject to }    \bX^T \thetab  =   \bv, \nonumber 
\end{align}
where $\thetab$ is a vector of dual variables, and $\bv$
is a subgradient of 
$\sum_{\bg\in \mathcal{G}} \sqrt{n_\bg} \left\| \bbeta_{\bg} \right\|_2$ 
with respect to $\bbeta$. 
A dual optimal $\thetab^*$  can be obtained by  
projecting $\frac{\by}{\lambda}$
onto $\bF \equiv \left( \bX^T \thetab   =   \bv \right)$, 
denoted by $P_{\bF}\left(\frac{\by}{\lambda}\right)$.
We will use the ``non-expansiveness'' property of this projection operator to derive
screening rules in $\S$\ref{subset:screen_ogroup}.

Next we derive the KKT conditions for overlapping group lasso. 
Introducing $\bz = \by - \bX \bbeta$,  \eqref{equ:general} can be written as
\begin{align}
\label{eq:overlap_g_lasso2}
&\min_{\bbeta}  \frac{1}{2} \left\| \bz  \right\|_2^2 + 
\lambda \sum_{\bg\in \mathcal{G}}\sqrt{n_{\bg}} \left\| \bbeta_{\bg} \right\|_2 \\
&\mbox{subject to }  \bz = \by - \bX \bbeta. \nonumber
\end{align}
Then, a Lagrangian of \eqref{eq:overlap_g_lasso2} is 
\begin{equation}
L(\bbeta,\bz,\thetab) =  \frac{1}{2} \left\| \bz  \right\|_2^2 + 
\lambda \sum_{\bg\in \mathcal{G}} \sqrt{n_{\bg}} \left\| \bbeta_\bg \right\|_2 
+\lambda \mathbf{\thetab}^T \left( \by - \bX \bbeta - \bz \right),
\label{eq:overlap_g_lasso_l}
\end{equation}
and the  KKT conditions of \eqref{eq:overlap_g_lasso_l} are as follows:
\begin{align}
&0 \in \frac{ \partial L(\bbeta^{*},\bz^{*},\thetab^{*})}{\partial \bbeta_\bg} = -\lambda \bX_\bg^T \thetab^{*}  + 
\lambda \bv_\bg,
\label{eq:kkt1}\\
&0 = \nabla_{\bz} L(\bbeta^{*},\bz^{*},\thetab^{*}) = \bz^{*} - \lambda \thetab^{*} \label{eq:kkt2}, \\
&0 = \nabla_{\thetab} L(\bbeta^{*},\bz^{*},\thetab^{*}) =  \lambda \left( \by - \bX \bbeta^{*} - \bz^{*} \right)
 \label{eq:kkt3},
\end{align}
where $\bv_\bg$ is a subgradient of 
$\sum_{\bg\in \mathcal{G}} \sqrt{n_{\bg}} \left\| \bbeta_\bg \right\|_2$ 
with respect to $\bbeta_\bg$.
From \eqref{eq:kkt2} and \eqref{eq:kkt3}, we obtain a bridge 
between the primal and dual solutions:
\begin{equation}
\label{eq:primal_dual}
\lambda \thetab^{*} = \by - \bX \bbeta^{*}.
\end{equation}

Let us define two sets of groups that overlap with group $\bg$ as follows: 
\begin{align}
&\bar{\mathcal{G}}_1 = \{\bh: \bh\in \mathcal{G} - \bg, \bh  \subseteq \bg, \bh \cap \bg \neq \emptyset \}, 
\label{eq:group_def1}\\
&\bar{\mathcal{G}}_2 = \{\bh: \bh\in \mathcal{G} - \bg,  \bh \nsubseteq \bg, \bh \cap \bg \neq \emptyset \}, 
\end{align}
where
$\bar{\mathcal{G}}_1$ and 
$\bar{\mathcal{G}}_2$ are sets of groups 
overlapping with $\bg$, and
$\bar{\mathcal{G}}_1$ includes the groups that are inclusive of $\bg$.
Then, 
we denote 
$\bv_\bg =\sqrt{n_\bg} [\gamma_1, \ldots, \gamma_{n_{\bg}}]^T$, where $\gamma_j$
is given by 
\begin{equation}
\label{eq:subgrad}
\gamma_j = 
u_j
 + \sum_{j\in \bh, \bh \in \bar{\mathcal{G}}_1} w_{j}
 + \sum_{j\in \bh, \bh \in \bar{\mathcal{G}}_2} s_j,
\end{equation}
where $u_j$ is a subgradient of $\left\| \bbeta_{\bg} \right\|_2$ 
with respect to $\beta_j$;
$w_j$ and $s_j$ 
are subgradients of $\left\| \bbeta_{\bh} \right\|_2$
with respect to $\beta_j$, where
$\bh$ belongs to  $\bar{\mathcal{G}}_1$ and $\bar{\mathcal{G}}_2$, respectively.
The definition of $\ell_2$ norm subgradient, i.e.,
$\left\| \bu_\bg \right\|_2 \leq 1$, 
$\left\| \bw_\bh \right\|_2 \leq 1$, and
$\left\| \bs_\bh \right\|_2 \leq 1$, gives us 
\begin{equation}
 \label{eq:subgrad_cond}
\sqrt{\sum_{j\in \bg} u_j^2}  = 
\sqrt{\sum_{j \in \bg} 
\left(\gamma_j - \sum_{j\in \bh, \bh \in \bar{\mathcal{G}}_1} w_j -
 \sum_{j\in \bh, \bh \in \bar{\mathcal{G}}_2} s_j \right)^2} \leq 1,
\end{equation}
where the equality holds when $\bbeta_\bg^* \neq {\bf 0}$.
Plugging  \eqref{eq:kkt1} into \eqref{eq:subgrad_cond},
 a sufficient condition  for $\bbeta_\bg^* = {\bf 0}$
 is given by 
{\scriptsize
\begin{equation}
\label{eq:zero_cond2}
\min_{\bw, \bs:
\left\| \bw_\bh \right\|_2 \leq 1, 
\left\| \bs_\bh \right\|_2 \leq 1 
}   \sqrt{\sum_{j \in \bg} \left(    \bx_j^T \thetab^{*}   - 
\sum_{j\in \bh, \bh \in \bar{\mathcal{G}}_1} \sqrt{n_{\bh}} w_j -
\sum_{j\in \bh,\bh \in \bar{\mathcal{G}}_2} \sqrt{n_{\bh}} s_j \right)^2} < \sqrt{n_{\bg}}.
\end{equation}
}

To screen each group independently 
(i.e., test using only the coefficients in group $\bg$), we set 
$\bs_\bh = 0$, 
for all $ \bh \in \bar{\mathcal{G}}_2$. 
This is a valid subgradient because it always satisfies $\left\| \bs_{\bh} \right\|_2 \leq 1$.
Given  subgradients of the groups inclusive of $\bg$,
we have the following screening condition for $\bbeta_\bg^*= \mathbf{0}$: 
if $b_\bg < \sqrt{n_{\bg}}$,  then $\bbeta^*_\bg = \mathbf{0}$, 
where $b_\bg$ is defined by
\begin{align}
\label{eq:mini_opt_problem}
b_\bg \equiv  \min_{\bw: \left\| \bw_\bh \right\|_2 \leq 1} & \sqrt{\sum_{j \in \bg}
\left(  \bx_j^T \thetab^{*} 
   - \sum_{j\in \bh, \bh \in \bar{\mathcal{G}}_1} \sqrt{n_{\bh}} w_j  \right)^2}. 
\end{align}
Note that 
$b_{\bg}$ is an upper bound on the left-hand side of \eqref{eq:zero_cond2}
due to the fixed $\bs_\bh = {\bf 0}$, for all $\bh \in \bar{\mathcal{G}}_2$.
If $b_\bg < \sqrt{n_{\bg}}$, then \eqref{eq:zero_cond2} holds, 
and thus $\bbeta^*_\bg = \mathbf{0}$.

\subsection{Screening Rules for Overlapping Group Lasso}
\label{subset:screen_ogroup}
So far, we have derived a condition for $\bbeta^*_\bg = \mathbf{0}$;
however, it is not yet usable for screening because $\thetab^{*}$ is unknown. 
Thus, by
following the DPP approach by Wang et al. \cite[]{wang2013lasso},
we first find 
a region $\Thetab$ that contains
$\thetab^{*}$.

In \eqref{eq:dual_formul}, an optimal $\thetab^*$ is 
the projection of $\frac{\by}{\lambda}$ onto the constraint $\bF$:
\[
\thetab^{*} = P_{\bF}\left(\frac{\by}{\lambda}\right) = \argmin_{\thetab \in \bF} \left\| \thetab - \frac{\by}{\lambda} \right\|_2,
\]
where $P_{\bF}$ is the  projection operator with $\bF$
which is a nonempty, closed convex subset of a Hilbert space 
($\bF$ is nonempty 
because $\mathbf{0} \in \bF$, and 
closed convex because
it is an intersection of closed half-spaces). 
Thus, we can use the ``non-expansiveness'' property of $P_{\bF}$ \cite[]{bertsekas2003convex}, given by
\begin{equation}
\label{eq:nonex}
\left\| P_{\bF}\left( \frac{\by}{\lambda} \right) - P_{\bF}\left(\frac{\by}{\lambda_0}\right)  \right\|_2 
=
\left\| \thetab^{*}(\lambda) - \thetab^{*}(\lambda_0)  \right\|_2 
 \leq \left\| \frac{\by}{\lambda} - \frac{\by}{\lambda_0} \right\|_2,
\end{equation}
where $\lambda_0$ 
is a tuning parameter ($\lambda_0 > \lambda$), and
$\thetab^*(\lambda) \equiv \thetab^*$,
and $\thetab^*(\lambda_0)$ is a dual optimal solution given $\lambda_0$.
Here \eqref{eq:nonex} shows that 
$\thetab^*(\lambda)$ lies within a sphere $\Thetab$ centered at 
$\thetab^*(\lambda_0)$ with a radius of 
$\rho = \left\| \frac{\by}{\lambda} - \frac{\by}{\lambda_0} \right\|_2$.
Based on this, we can represent $\thetab^*(\lambda) =  \thetab^*(\lambda_0) + \br$,
where $\left\| \br \right\|_2 \leq \rho$.
By plugging it  into 
\eqref{eq:mini_opt_problem} and maximizing the objective over $\br$, we have
the following screening rule: if $b'_\bg < \sqrt{n_{\bg}}$, then $\bbeta^* = {\bf 0}$, where $b'_\bg$
is defined by
{\footnotesize 
\begin{align}
\label{eq:mini_opt_problem2}
b'_\bg \equiv  \sup_{\br: \left\|\br \right\|_2 \leq \rho}\min_{\bw: \left\| \bw_\bh \right\|_2 \leq 1 } & \sqrt{\sum_{j \in \bg}
\left(   \bx_j^T \left\{\thetab^{*}(\lambda_0)+\br \right\}
   - \sum_{j\in \bh, \bh \in \bar{\mathcal{G}}_1} \sqrt{n_{\bh}} w_j  \right)^2}. 
\end{align}
}
Notice that $b'_\bg$ is an upper bound on $b_\bg$, and thus
$b'_\bg < \sqrt{n_{\bg}} \Rightarrow b_\bg < \sqrt{n_{\bg}}
\Rightarrow \bbeta^* = {\bf 0}$. 
With a little bit of algebra, 
we get our screening rule for overlapping group lasso.

\begin{theorem}
For the overlapping lasso problem, suppose that we are given an optimal 
dual solution
$\thetab^{*}(\lambda_0)$.
Then for $\lambda < \lambda_0$, $\bbeta_\bg^*(\lambda) = \mathbf{0}$ if
{\scriptsize  
\begin{equation}
\label{eq:theorem_screen}
 \min_{\bw_\bh: \left\| \bw_\bh \right\|_2 \leq 1}   
\sqrt{ \sum_{j \in \bg}
 \left(
     \bx_j^T \thetab^*(\lambda_0)  
   - \sum_{j\in \bh, \bh \in \bar{\mathcal{G}}_1} \sqrt{n_\bh} w_{j}  
   \right)^2}
   < \sqrt{n_\bg} -  \left\| \bX_\bg \right\|_F \left\| \by \right\|_2  \left| \frac{1}{\lambda} - \frac{1}{\lambda_0} \right|.
\end{equation}
}
\label{theorem:screen}
\end{theorem}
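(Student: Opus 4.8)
The plan is to show that the left-hand side of \eqref{eq:theorem_screen} plus the correction term $\left\| \bX_\bg \right\|_F \left\| \by \right\|_2 \left| 1/\lambda - 1/\lambda_0 \right|$ is an upper bound on the quantity $b'_\bg$ defined in \eqref{eq:mini_opt_problem2}. Since it was already established that $b'_\bg < \sqrt{n_\bg}$ implies $\bbeta^*_\bg = \mathbf{0}$, the theorem then follows immediately by rearranging the inequality. First I would fix notation: for a feasible subgradient $\bw$ write $a_j(\bw) = \bx_j^T \thetab^*(\lambda_0) - \sum_{j\in \bh,\, \bh\in \bar{\mathcal{G}}_1} \sqrt{n_\bh}\, w_j$ and, for the perturbation direction $\br$, write $c_j = \bx_j^T \br$, so that the objective inside \eqref{eq:mini_opt_problem2} is exactly $\left\| \ba_\bg(\bw) + \bc_\bg \right\|_2$, where $\ba_\bg(\bw)$ and $\bc_\bg$ collect the entries indexed by $\bg$. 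The key structural observation is that $\bc_\bg = \bX_\bg^T \br$, which does not depend on $\bw$, while $\ba_\bg(\bw)$ does not depend on $\br$.

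The main step is to decouple the $\br$-perturbation from the inner minimization over $\bw$. By the triangle inequality, $\left\| \ba_\bg(\bw) + \bc_\bg \right\|_2 \le \left\| \ba_\bg(\bw) \right\|_2 + \left\| \bc_\bg \right\|_2$ for every feasible $\bw$; taking the minimum over $\bw$ on both sides, and using that $\left\| \bc_\bg \right\|_2$ is constant in $\bw$, gives $\min_\bw \left\| \ba_\bg(\bw) + \bc_\bg \right\|_2 \le \min_\bw \left\| \ba_\bg(\bw) \right\|_2 + \left\| \bc_\bg \right\|_2$. I would then bound the perturbation term uniformly over the feasible ball, using $\left\| \bc_\bg \right\|_2 = \left\| \bX_\bg^T \br \right\|_2 \le \left\| \bX_\bg \right\|_F \left\| \br \right\|_2 \le \left\| \bX_\bg \right\|_F \rho$, where $\rho = \left\| \by \right\|_2 \left| 1/\lambda - 1/\lambda_0 \right|$ comes from the non-expansiveness bound \eqref{eq:nonex}, and the spectral norm is dominated by the Frobenius norm. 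Because both $\min_\bw \left\| \ba_\bg(\bw) \right\|_2$ and $\left\| \bX_\bg \right\|_F \rho$ are now independent of $\br$, taking the supremum over $\{ \br : \left\| \br \right\|_2 \le \rho \}$ leaves the bound untouched, yielding $b'_\bg \le \min_\bw \left\| \ba_\bg(\bw) \right\|_2 + \left\| \bX_\bg \right\|_F \rho$. The first term is precisely the left-hand side of \eqref{eq:theorem_screen}, so the hypothesis of the theorem forces $b'_\bg < \sqrt{n_\bg}$, and hence $\bbeta^*_\bg(\lambda) = \mathbf{0}$.

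Every individual manipulation is elementary, so I expect the only genuinely delicate point to be the handling of the nested $\sup_\br \min_\bw$ structure in \eqref{eq:mini_opt_problem2}. Rather than analyzing this saddle-point problem directly, the trick is to bound the $\br$-dependent contribution uniformly in $\br$ \emph{before} minimizing over $\bw$; this is what collapses the outer supremum to a constant additive term and makes the argument go through cleanly. A secondary modeling choice worth noting is the use of the Frobenius norm $\left\| \bX_\bg \right\|_F$ instead of the tighter spectral norm $\left\| \bX_\bg \right\|_2$: the Frobenius bound is looser, but it is trivial to compute in closed form, which is presumably why it is adopted for the stated rule.
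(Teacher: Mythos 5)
Your proposal is correct and follows essentially the same route as the paper's proof: the paper also writes $\thetab^*(\lambda)=\thetab^*(\lambda_0)+\br$, separates the $\br$-dependent term via Minkowski's (triangle) inequality, and bounds $\left\|\bX_\bg^T\br\right\|_2$ by $\left\|\bX_\bg\right\|_F\,\rho$ using Cauchy--Schwarz and $\left\|\bX_\bg\right\|_2\le\left\|\bX_\bg\right\|_F$. Your only cosmetic difference is phrasing the argument through the quantity $b'_\bg$ with an explicit uniform-in-$\br$ bound, whereas the paper bounds $b_\bg$ directly; the inequalities are identical.
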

\begin{proof}
See Appendix \ref{appendix:proof_theorem1}.
\end{proof}

Using the bridge between the primal and dual in \eqref{eq:primal_dual}, 
we can also obtain a screening rule in a primal form. 
\begin{theorem}
For the overlapping lasso problem, suppose that we are given an optimal 
solution
$\bbeta^{*}(\lambda_0)$. 
Then for $\lambda < \lambda_0$, $\bbeta_\bg^*(\lambda) = \mathbf{0}$ if
{\tiny
\begin{equation}
\label{eq:theorem_screen2}
 \min_{\bw_\bh: \left\| \bw_\bh \right\|_2 \leq 1}   
\sqrt{ \sum_{j \in \bg}
 \left(
    \bx_j^T \frac{\by - \bX \bbeta^*(\lambda_0)}{\lambda_0}   
   - \sum_{j\in \bh, \bh \in \bar{\mathcal{G}}_1} \sqrt{n_\bh} w_{j}  
   \right)^2}
   < \sqrt{n_\bg} -  \left\| \bX_\bg \right\|_F \left\| \by \right\|_2  \left| \frac{1}{\lambda} - \frac{1}{\lambda_0} \right|.
\end{equation}
}
\label{theorem:screen2}
\end{theorem}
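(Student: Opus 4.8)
The plan is to obtain Theorem~\ref{theorem:screen2} as an immediate consequence of Theorem~\ref{theorem:screen}, using only the primal--dual correspondence \eqref{eq:primal_dual}. The guiding observation is that the screening inequality \eqref{eq:theorem_screen} depends on the reference problem at $\lambda_0$ solely through the dual optimal $\thetab^{*}(\lambda_0)$; hence, once I re-express $\thetab^{*}(\lambda_0)$ in terms of the primal optimal $\bbeta^{*}(\lambda_0)$, the two statements become the same inequality written in different variables.

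First I would instantiate the bridge \eqref{eq:primal_dual}, which itself follows from the stationarity conditions \eqref{eq:kkt2} and \eqref{eq:kkt3}, at the reference level $\lambda_0$. This yields
\[
\thetab^{*}(\lambda_0) = \frac{\by - \bX \bbeta^{*}(\lambda_0)}{\lambda_0},
\]
expressing the dual optimum entirely through the primal optimum and the data. Since this identity holds at optimality for each fixed regularization parameter, no extra computation is needed: supplying $\bbeta^{*}(\lambda_0)$ already determines $\thetab^{*}(\lambda_0)$.

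Next I would substitute this expression into the left-hand side of \eqref{eq:theorem_screen}. Each term $\bx_j^T \thetab^{*}(\lambda_0)$ is replaced by $\bx_j^T \frac{\by - \bX \bbeta^{*}(\lambda_0)}{\lambda_0}$, while the inner minimization over $\bw_\bh$ subject to $\left\| \bw_\bh \right\|_2 \le 1$, as well as the entire right-hand side $\sqrt{n_\bg} - \left\| \bX_\bg \right\|_F \left\| \by \right\|_2 \left| \frac{1}{\lambda} - \frac{1}{\lambda_0} \right|$, are untouched because neither involves $\thetab^{*}(\lambda_0)$. The inequality that results is exactly \eqref{eq:theorem_screen2}, and since it is a term-by-term rewriting of a condition already shown sufficient for $\bbeta_\bg^{*}(\lambda) = \mathbf{0}$, that implication carries over verbatim.

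I anticipate no real obstacle, as the argument reduces to a single substitution. The only point meriting care is that the primal optimum $\bbeta^{*}(\lambda_0)$ need not be unique when $\bX$ is rank deficient, whereas Theorem~\ref{theorem:screen} presumes a well-defined $\thetab^{*}(\lambda_0)$. This is resolved by observing that the fitted vector $\bX \bbeta^{*}(\lambda_0)$, and therefore $\frac{\by - \bX \bbeta^{*}(\lambda_0)}{\lambda_0}$, is invariant across all primal optima; equivalently, the dual optimum is the unique projection $P_{\bF}(\by/\lambda_0)$, so the quantity substituted into the rule is unambiguous regardless of which primal solution is provided.
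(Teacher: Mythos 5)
Your proposal is correct and follows exactly the route the paper intends: it invokes the primal--dual bridge \eqref{eq:primal_dual} at $\lambda_0$ to write $\thetab^{*}(\lambda_0) = \frac{\by - \bX \bbeta^{*}(\lambda_0)}{\lambda_0}$ and substitutes this into Theorem \ref{theorem:screen}, which is precisely the ``straightforward'' derivation the authors omit. Your additional observation that $\bX\bbeta^{*}(\lambda_0)$ is invariant across primal optima (since the dual optimum is the unique projection $P_{\bF}(\by/\lambda_0)$) is a sound refinement beyond what the paper states.
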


We also note that Theorem \ref{theorem:screen2}
can be employed to solve lasso problems following a $\lambda$ path
$\{\lambda_1,\lambda_2, \ldots, \lambda_T\}$
in a descending order. 
The $\lambda$ path is determined {\it a priori}, and one may choose
linearly, geometrically, or logarithmically spaced $\lambda$ values.
In the sequential version of screening, we first perform screening 
with $\lambda_1$ using $\lambda'$ with $\bbeta^*(\lambda') = {\bf 0}$.
We then run a solver using the remaining coefficients with their corresponding groups;
its results become $\bbeta^*(\lambda_{1})$. 
Now, using $\lambda_1$ with $\bbeta^*(\lambda_{1})$, we
perform screening for $\lambda_2$. 
We repeat the above procedure
for all remaining $\lambda$ parameters.
The following theorem shows sequential screening rule, where
a  screening rule for $\lambda_t$
is constructed based on
$\bbeta^{*}(\lambda_{t-1})$, $t\geq 2$.

\begin{theorem}
For the overlapping lasso problem with a $\lambda$ path
$\{\lambda_1, \ldots, \lambda_T\}$, $\lambda_{t-1}>\lambda_{t}, t=2,\ldots,T$, suppose that we are given an optimal 
solution
$\bbeta^{*}(\lambda_{t-1})$. 
Then, $\bbeta_\bg^*(\lambda_t) = \mathbf{0}$ if
{\tiny  
\begin{equation}
\label{eq:theorem_screen3}
 \min_{\bw_\bh: \left\| \bw_\bh \right\|_2 \leq 1}   
\sqrt{ \sum_{j \in \bg}
 \left(
   \bx_j^T \frac{\by - \bX \bbeta^*(\lambda_{t-1})}{\lambda_{t-1}}    
   - \sum_{j\in \bh, \bh \in \bar{\mathcal{G}}_1} \sqrt{n_\bh} w_{j}  
   \right)^2}
   < \sqrt{n_\bg} -  \left\| \bX_\bg \right\|_F \left\| \by \right\|_2  \left| \frac{1}{\lambda_t} - \frac{1}{\lambda_{t-1}} \right|.
\end{equation}
}
\label{theorem:screen3}
\end{theorem}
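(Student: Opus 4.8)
The plan is to obtain Theorem~\ref{theorem:screen3} as a direct specialization of Theorem~\ref{theorem:screen2} applied to a consecutive pair of grid points. Concretely, I would invoke Theorem~\ref{theorem:screen2} with the reference parameter $\lambda_0$ set to $\lambda_{t-1}$ and the target parameter $\lambda$ set to $\lambda_t$. The two hypotheses required by Theorem~\ref{theorem:screen2} are that an optimal solution at the reference parameter is available and that the target parameter is strictly smaller; both hold here, since $\bbeta^*(\lambda_{t-1})$ is assumed optimal and $\lambda_t < \lambda_{t-1}$ by the ordering of the $\lambda$ path. Under this substitution the dual reference point $(\by - \bX\bbeta^*(\lambda_0))/\lambda_0$ in \eqref{eq:theorem_screen2} turns into $(\by - \bX\bbeta^*(\lambda_{t-1}))/\lambda_{t-1}$, so the left-hand side of \eqref{eq:theorem_screen2} coincides with that of \eqref{eq:theorem_screen3}; likewise the right-hand side becomes $\sqrt{n_\bg} - \|\bX_\bg\|_F\|\by\|_2\,|1/\lambda_t - 1/\lambda_{t-1}|$. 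The implication ``condition $\Rightarrow \bbeta^*_\bg(\lambda_t) = \mathbf{0}$'' is then immediate.

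If a self-contained argument is preferred over citing Theorem~\ref{theorem:screen2}, I would retrace the derivation of $\S$\ref{subset:screen_ogroup} with $(\lambda_0,\lambda) = (\lambda_{t-1},\lambda_t)$. Starting from the sufficient condition $b'_\bg < \sqrt{n_\bg}$ associated with \eqref{eq:mini_opt_problem2}, I use the primal-dual bridge \eqref{eq:primal_dual} to write $\thetab^*(\lambda_{t-1}) = (\by - \bX\bbeta^*(\lambda_{t-1}))/\lambda_{t-1}$, so that only the primal solution at the previous grid point is needed. The non-expansiveness bound \eqref{eq:nonex} then gives $\thetab^*(\lambda_t) = \thetab^*(\lambda_{t-1}) + \br$ with $\|\br\|_2 \le \rho$ and radius $\rho = \|\by/\lambda_t - \by/\lambda_{t-1}\|_2 = \|\by\|_2\,|1/\lambda_t - 1/\lambda_{t-1}|$. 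Finally, bounding $b'_\bg$ by the max-min inequality $\sup_{\br}\min_{\bw} \le \min_{\bw}\sup_{\br}$ followed by the triangle inequality (splitting off the $\br$ contribution and using $\|\bX_\bg^T\br\|_2 \le \|\bX_\bg\|_F\|\br\|_2 \le \|\bX_\bg\|_F\rho$) yields $b'_\bg \le \min_{\bw_\bh}(\cdots) + \|\bX_\bg\|_F\rho$, which is precisely the statement encoded in \eqref{eq:theorem_screen3}.

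The analytic content is therefore entirely inherited from Theorem~\ref{theorem:screen} (equivalently Theorem~\ref{theorem:screen2}), and no new inequality has to be proved. The point I expect to require the most care is not a computation but the legitimacy of iterating the rule along the path: the screening rule at $\lambda_t$ is valid only if $\bbeta^*(\lambda_{t-1})$ is a genuine global optimum of the full problem \eqref{equ:general}, rather than merely the optimum of the reduced problem solved after screening at $\lambda_{t-1}$. I would close this gap by appealing to the exactness of the screening rules, namely that no nonzero coefficient of the true optimum is ever discarded; consequently the solution of the reduced problem, padded with zeros on the screened-out groups, coincides with the global optimum $\bbeta^*(\lambda_{t-1})$. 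With this established, the hypothesis of Theorem~\ref{theorem:screen2} is met at every stage, and the sequential rule follows by induction on $t$.
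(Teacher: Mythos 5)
Your proposal is correct and matches the paper's intent exactly: the paper explicitly omits this proof as a straightforward specialization of Theorem \ref{theorem:screen} (via Theorem \ref{theorem:screen2}) with $\lambda_0 = \lambda_{t-1}$ and $\lambda = \lambda_t$, which is precisely the substitution you carry out, and your fallback derivation retraces the paper's own proof of Theorem \ref{theorem:screen} (non-expansiveness, the primal--dual bridge \eqref{eq:primal_dual}, Minkowski, Cauchy--Schwarz). Your closing remark on why the iterated rule remains valid along the path --- exactness guarantees the reduced-problem solution padded with zeros is the true global optimum $\bbeta^*(\lambda_{t-1})$ --- is a worthwhile observation the paper leaves implicit.
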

We omit the proofs for Theorem \ref{theorem:screen2} and Theorem \ref{theorem:screen3}
because it is straightforward to derive them from 
Theorem \ref{theorem:screen}.

\subsection{Screening Algorithms for Overlapping Group Lasso}
\label{subsec:screen_alg}
To use Theorems \ref{theorem:screen}, \ref{theorem:screen2}, \ref{theorem:screen3}
we need an efficient way to obtain the left-hand side.
Instead of solving the left-hand side directly, we  minimize an upper bound on the left-hand side
because it can be quickly solved. 
Any upper bounds give us a valid screening rule,
but the tighter the bound, the better the screening efficiency 
(it discards more features).
Note that the goal of screening is to speed up  optimization, and thus
we intend to present a simple yet efficient algorithm.

We first present our algorithm and then verify that 
it minimizes an upper bound on the left-hand side.
We adopt a simple coordinate descent-type approach, where each group is
used for minimization one at a time. 
Suppose that we perform screening on group $\bg$. 
We start with 
making two variables: 
$l \leftarrow 0$, $\ba \leftarrow \bg$,
and a set of overlapping groups 
$\{\bh_1,\ldots,\bh_K: \bh_k \in \bar{\mathcal{G}}_1, k=1, \ldots, K\}$ 
(any ordering works for our purpose).
For each group $\bh_k$, 
we take the intersection between 
$\bh_k$ and $\ba$, i.e., $\bh'_k \leftarrow \bh_k \cap \ba$. 
If $\bh'_{k} = \emptyset$, we skip $\bh_{k}$ and proceed to the next $\bh_{k+1}$.
If $\bh'_{k} \neq \emptyset$, we take the following procedure.
If $ \left\|\bX_{\bh'_k} \thetab^*(\lambda_0) \right\|_{2}  
\leq
 \sqrt{n_{\bh_k}}$, 
we set $l \leftarrow l $; 
otherwise $l \leftarrow l + z$, 
where 
$z = \sum_{j\in \bh'_{k}} \left\{\bx_{j}^{T} \thetab^*(\lambda_0) -  \sqrt{n_{\bh_k}} w_{j}\right\}^{2}$,
and 
$\{w_{j}: j\in \bh'_{k}\}$
is determined by the following algorithm.
\begin{enumerate}
\item Set $d = 1$. 
\item For each $j \in \bh'_{k}$, we compute
\begin{equation}
w_{j} = 
  \begin{cases}
   \frac{\bx_{j}^{T} \thetab^*(\lambda_0)}{\sqrt{n_{\bh_k}}}  & \text{, if } 
   \left| \frac{\bx_{j}^{T} \thetab^*(\lambda_0)}{\sqrt{n_{\bh_k}}} \right| \leq \sqrt{d} \\
\mbox{sign}\left\{\bx_{j}^{T} \thetab^*(\lambda_0) \right\} \sqrt{d}      & \text{, otherwise},
  \end{cases}
  \label{eq:min_subalg}
\end{equation}
and update $d \leftarrow d - w_{j}^{2}$.
\end{enumerate}

We then 
set $\ba \leftarrow \ba - \bh'_k$, 
and iterate this procedure over all groups in $\bar{\mathcal{G}}_1$.
Finally,
a minimized left-hand side for the screening rules is 
$\sqrt{l+ \left\| \bx_{\ba}^{T} \thetab^*(\lambda_0) \right\|_{2}^2}$.
This algorithm
 in a sequential  setting for overlapping group lasso
(screening for $\lambda_t$ given $\lambda_{t-1}$, and 
$\lambda_t \in \{\lambda_1, \ldots, \lambda_T\}$)
is summarized in Algorithm \ref{alg:screening}.

\begin{algorithm}[t]
{\scriptsize 
 \KwIn{$\bX, \by,  \lambda_{t-1}, \lambda_t \, (\lambda_{t-1} > \lambda_{t}),
 \mathcal{G}$, 
 $\thetab^*(\lambda_{t-1}) \left(= \frac{\by-\bX \bbeta^*(\lambda_{t-1})}{\lambda_{t-1}}\right)$ }
 \KwOut{$\mathcal{T}$ (a set of groups with potential nonzero coefficients)}
 $\mathcal{T} \leftarrow \emptyset$\;
 \For{$\bg \in \mathcal{G}$}{
$\bar{\mathcal{G}}_1 = \{\bh: \bh\in \mathcal{G} - \bg, \bh  \subseteq \bg, \bh \cap \bg \neq \emptyset \}$\;
 $\ba \leftarrow \bg$\; 
 $l \leftarrow 0$\; 
   \For{$\bh \in  \bar{\mathcal{G}}_1$}{ 
    $\bh' \leftarrow \bh \cap \ba$\;
  \If{$\left\| \bX_{\bh'}^T \thetab^*(\lambda_{t-1}) \right\|_2 >  \sqrt{n_{\bh}}$}{
  $d \leftarrow 1$\;
 \For{$j \in \bh'$}{
 	\If{$\left| \bx_{j}^{T} \thetab^*(\lambda_{t-1}) \right| \leq \sqrt{d n_{\bh}} $}{
	   		$d \leftarrow d - \left(\frac{\bx_{j}^{T} \thetab^*(\lambda_{t-1})}{\sqrt{n_{\bh}}} \right)^{2}$\;
 	}\Else{ 
 		$l \leftarrow l + \left[\bx_{j}^{T} \thetab^*(\lambda_{t-1}) -  \sqrt{d n_{\bh}} \mbox{sign}\left\{\bx_{j}^{T} \thetab^*(\lambda_{t-1}) \right\} \right]^{2}$\;
 		break\;
  	}
 }
  }
    $\ba \leftarrow \ba  -  \bh'$\; 
  }
  \If{$ \sqrt{l +  \left\|\bX_{\ba}^T \thetab^*(\lambda_{t-1})\right\|_2^2} \geq \sqrt{n_\bg} -  \left\| \bX_\bg \right\|_F \left\| \by \right\|_2 
  \left| \frac{1}{\lambda_t} - \frac{1}{\lambda_{t-1}} \right|$}{
  $\mathcal{T} \leftarrow \mathcal{T}  \cup  gi$\;
  }
 }
 }
 \caption{Screening for overlapping group Lasso}
 \label{alg:screening}
\end{algorithm}

Now, we show that this algorithm minimizes 
an upper bound on 
the left-hand side of Theorem \ref{theorem:screen}.
The key idea is that, at the $(k+1)$-th iteration, 
we minimize an upper bound on 
the bound obtained
 at the $k$-th iteration.
We denote $\ba$ by the set of coefficients to be processed,
and  $k$ by the iteration counter,
initialized by $\ba \leftarrow \bg$ and $k=1$.
At the $k$-th iteration,  the squared left-hand side is bounded 
as follows:
{\footnotesize 
\begin{align}
&\min_{\bw_{\bh}, \forall \bh \in \bar{\mathcal{G}}_1}   
\left[
\sum_{j \in \bg}
 \left(
\bx_{j}^T \thetab^*(\lambda_0)    
   - \sum_{j\in \bh, \bh \in \bar{\mathcal{G}}_1} \sqrt{n_\bh} w_j  
   \right)^2
   \right]
   \label{eq:screening_bound1} \\
&  \leq   
\min_{\bw_{\bh}, \forall \bh \in \bar{\mathcal{G}}_1 - \bh_k}   
\left[
\sum_{j \in \bg - \bh_k}
 \left(
   \bx_j^T \thetab^*(\lambda_0)   
   - \sum_{j\in \bh, \bh \in \bar{\mathcal{G}}_1 - \bh_k} \sqrt{n_\bh} w_j  
   \right)^2
      \right] 
      \label{eq:screening_bound2}\\
& \ \ \ \ \ \ \ \ \ \ \ \ \ \ \ \   \ \ \ \ \ \ \ \ \ \ \ \ \ \ \ \ +
\min_{\bw_{\bh_k}} 
 \left\|
\bX_{\bh_k}^T \thetab^*(\lambda_0)   
   - \sqrt{n_{\bh_k}} \bw_{\bh_k}
   \right\|_2^2.
   \nonumber
\end{align}
}
To obtain the upper bound in \eqref{eq:screening_bound2}, we set $w_j = 0$ for all 
$\{j: j \in   \bh \cap \bh_k, \bh \in \bar{\mathcal{G}}_1 -\bh_k\}$.
Let us fix $\{\bw_{\bh}: \bh \in \bar{\mathcal{G}}_1 - \bh_k\}$
and then find  
the bound in \eqref{eq:screening_bound2}.
Since the first term  is a constant due to fixed 
$\{\bw_{\bh}\}$, we find $z$ that bounds the second term:
$$\min_{\bw_{\bh_k}} 
 \left\|
\bX_{\bh_k}^T \thetab^*(\lambda_0)   
   - \sqrt{n_{\bh_k}} \bw_{\bh_k}
   \right\|_2^2 \leq z.$$
Based on the subgradient condition $\left\| \bw_{\bh_k} \right\|_2 \leq 1$,
if $\left\| \bX_{\bh_k}^T \thetab^*(\lambda_0) \right\|_2 \leq \sqrt{n_{\bh_k}}$,
we set $ \left\|
\bX_{\bh_k}^T \thetab^*(\lambda_0)   
   - \sqrt{n_{\bh_k}} \bw_{\bh_k}
   \right\|_2^2 = 0 = z$;
otherwise, we find an upper bound $z$ 
using the  simple coordinate descent-type procedure
with \eqref{eq:min_subalg}. 
It is easy to see that the 
procedure with \eqref{eq:min_subalg}
satisfies the subgradient condition $\left\| \bw_{\bh_k} \right\|_2 \leq 1$,
and thus $z$ is a valid upper bound.
Then, we set $\ba \leftarrow \ba - \bh_k$ because
$\bh_k$ is processed, and set $l \leftarrow z$.

Subsequently, we get an 
upper bound on \eqref{eq:screening_bound2}:
{\scriptsize
\begin{align}
&\min_{\bw^{\bh}, \forall \bh \in \bar{\mathcal{G}}_1 - \bh_k}   
\left[
\sum_{j \in \bg - \bh_k}
 \left(
   \bx_j^T \thetab^*(\lambda_0)   
   - \sum_{j\in \bh, \bh \in \bar{\mathcal{G}}_1 - \bh_k} \sqrt{n_\bh} w_j  
   \right)^2
   \right]
 + l   
   \nonumber \\
& \leq   
\min_{\bw^{\bh}, \forall \bh \in \bar{\mathcal{G}}_1 - \bh_k - \bh_{k+1}}   
\left[
\sum_{j \in \bg - \bh_k-\bh_{k+1}}
 \left(
   \bx_j^T \thetab^*(\lambda_0)   
   - \sum_{j\in \bh, \bh \in \bar{\mathcal{G}}_1 - \bh_k -\bh_{k+1}} \sqrt{n_\bh} w_j  
   \right)^2
   \right]
  \\
& \ \ \ \ \ \ \ \  \ \ \ \  \ \ \ \ \ \ \ \  \ \ \ \ +\min_{\bw_{\bh'_{k+1}}} 
 \left\|
\bX_{\bh'_{k+1}}^T \thetab^*(\lambda_0)   
   - \sqrt{n_{\bh_{k+1}}} \bw_{\bh'_{k+1}}
   \right\|_2^2 + l, 
\nonumber
\label{eq:screening_bound3} 
\end{align}}
where $\bh'_{k+1} = \bh_{k+1} \cap \ba$. 
Fixing $\{\bw_{\bh}: \bh \in \bar{\mathcal{G}}_1 - \bh_k - \bh_{k+1}\}$, 
and setting 
$w_j = 0, \, \forall j \in   \bh \cap \bh_k, \bh \in \bar{\mathcal{G}}_1 -\bh_k - \bh_{k+1}$,
we minimize an upper bound $z$ on $ \left\|
\bX_{\bh'_{k+1}}^T \thetab^*(\lambda_0)   
   - \sqrt{n_{\bh_{k+1}}} \bw_{\bh'_{k+1}}
   \right\|_2^2$ using the procedure with \eqref{eq:min_subalg},
   and $l \leftarrow l + z$.
We iterate this procedure 
over all groups in $\bar{\mathcal{G}}_1$, i.e., $\{\bh \in \bar{\mathcal{G}}_1\}$,
resulting in an upper bound on the left-hand side as follows:

{\footnotesize 
\begin{align}
\min_{\bw^{\bh}, \forall \bh \in \bar{\mathcal{G}}_1 - \bh_1 - \ldots - \bh_{K}}   
\left[
\sum_{j \in \bg -  \bh_1 - \ldots - \bh_{K} }
 \left(
   \bx_j^T \thetab^*(\lambda_0)   
   - \sum_{j\in \bh, \bh \in \bar{\mathcal{G}}_1 - \bh_1 - \ldots - \bh_{K}} \sqrt{n_\bh} w_j  
   \right)^2
   \right]
 +  l. 
\nonumber
\end{align}}
By setting
$w_j = 0$ for all  $j \in   \bh \cap \bh_k, \bh \in \bar{\mathcal{G}}_1 - \bh_1 - \ldots - \bh_{K}$,
we get an 
upper bound on the 
squared 
left-hand side, i.e., $l+ \left\| \bx_{\ba}^{T} \thetab^*(\lambda_0) \right\|_{2}^2$;
taking 
the square root on it,  we obtain the left-hand side of Theorem \ref{theorem:screen}.

We note that the DPP screening rule for nonoverlapping group lasso (GDPP)
\cite[]{wang2013lasso}
is a special case of the proposed screening rules for overlapping group lasso.
In Theorem \ref{theorem:screen}, 
by setting $w_j = 0$ for all $j \in \bh$, we obtain GDPP, where 
its left-hand side is an upper bound on that of our screening rules.
This  implies  that
GDPP is also an exact screening rule for overlapping group lasso; 
however, GDPP would not be as efficient as Theorem \ref{theorem:screen} due to the lack of 
degrees of freedom to decrease its left-hand side. 
It is surprising that GDPP is applicable to overlapping group lasso
because GDPP is derived under the assumption that groups do not overlap.

In practice, finding $\bar{\mathcal{G}}_1$ can be an algorithmic bottleneck
(line 3 in Algorithm \ref{alg:screening}). 
To search for $\bar{\mathcal{G}}_1$ efficiently, we used a simple algorithm.
We first sort the groups based on the smallest index of each group, resulting
in $\mathcal{G} = \{\bg^{(1)}, \ldots, \bg^{(M)}\}$.
To perform a screening test on $\bg^{(m)}$, 
we find its $\bar{\mathcal{G}}_1$ 
by searching for the groups between $\bg^{(m+1)}$ and
$\bg^{(m+W)}$, where
$W$ is the user-defined window size.
With larger $W$, we may discard more features, 
but the computational complexity for screening
increases linearly in $W$.

\subsubsection{Screening Algorithm for Sparse Overlapping Group Lasso}

\begin{algorithm}[t]
{\scriptsize 
 \KwIn{$\bX, \by,  \lambda_{t-1}, \lambda_t \, (\lambda_{t-1} > \lambda_{t}),
 \mathcal{G}$, 
 $\thetab^*(\lambda_{t-1}) \left(= \frac{\by-\bX \bbeta^*(\lambda_{t-1})}{\lambda_{t-1}}\right)$ }
 \KwOut{$\mathcal{T}$ (a set of groups with potential nonzero coefficients)}
 $\mathcal{T} \leftarrow \emptyset$\;
 \For{$\bg \in \mathcal{G}$}{
\If{$\left| \mathcal{G} \right| \geq 2$}{
 $l \leftarrow 0$\; 
   \For{$j \in  \mathcal{G}$}{ 
 	\If{$\left| \bx_{j}^{T} \thetab^*(\lambda_{t-1}) \right| > 1 $}{
 		$l \leftarrow l + \left\{\bx_{j}^{T} \thetab^*(\lambda_{t-1}) -   \mbox{sign}\left(\bx_{j}^{T} \thetab^*(\lambda_{t-1}) \right) \right\}^{2}$\;
  	}
  }
  } \Else{
 		$l \leftarrow  \left\{\bx_{j}^{T} \thetab^*(\lambda_{t-1})  \right\}^{2}$\;
  }
  \If{$ \sqrt{l} \geq \sqrt{n_\bg} -  \left\| \bX_\bg \right\|_F \left\| \by \right\|_2 
  \left| \frac{1}{\lambda_t} - \frac{1}{\lambda_{t-1}} \right|$}{
  $\mathcal{T} \leftarrow \mathcal{T}  \cup  gi$\;
  }
 }
 }
 \caption{Screening for sparse overlapping group lasso}
 \label{alg:sogrp_screening}
\end{algorithm}

Sparse overlapping group lasso is
a special case of overlapping group lasso that
includes $\ell_1$ penalty, defined by
\begin{equation}
\label{equ:sparse_og_lasso}
\min_{\bbeta} \frac{1}{2}
\lVert \by - \bX\bbeta  \rVert_2^2 
+ \lambda_1 \left\| \bbeta \right\|_1 +
\lambda_2  \sum_{\bg\in \mathcal{G}} \sqrt{n_\bg} \left\| \bbeta_{\bg} \right\|_2. 
\end{equation}
For this model, we  
provide a simple and fast algorithm
by considering only the individual coefficients
in $\ell_1$ penalty for $\bar{\mathcal{G}}_1$.
Note that individual features can be considered as
groups of size one, inclusive of  other groups. 
Substituting $\bar{\mathcal{G}}_1$ in line 3 in 
Algorithm \ref{alg:screening} 
by $\bar{\mathcal{G}}_1 = \{j : j \in \bg\}$, we obtain 
a screening algorithm for sparse overlapping group lasso,
summarized in Algorithm \ref{alg:sogrp_screening}.
It is worthwhile to mention that
Algorithm \ref{alg:sogrp_screening}
is significantly faster than Algorithm \ref{alg:screening} 
due to the lack of set operations in Algorithm \ref{alg:screening}.
Furthermore, in our experiments, we observed that 
Algorithm \ref{alg:sogrp_screening} discards similar numbers of 
features to Algorithm \ref{alg:screening} on various datasets.
Thus, if the model in \eqref{equ:sparse_og_lasso} is utilized in 
an application,
Algorithm \ref{alg:sogrp_screening} would be appealing
in terms of both its screening rejection power and speed.

\begin{algorithm}[t]
{\scriptsize 
 \KwIn{$\bX, \by, \mathcal{G}, r$ ($0<r<1$ is a common ratio for a geometric series)
 }
 \KwOut{$\lambda'$ that discards all features}
 $\lambda_{1} = \max_{\bg \in \mathcal{G}}{\frac{1}{\sqrt{n_\bg}}\left\|\bX_{\bg}^T\by\right\|_2}$\;
 \For{$t=2$ to $T$}{
$\lambda_{t} = \lambda_{t-1} r$\;
$\thetab^{*}(\lambda_{t-1}) = \frac{\by}{\lambda_{t-1}}$\;
  $\mathcal{T} \leftarrow$ 
  Algorithm\ref{alg:screening}$(\bX,\by,\lambda_{t},\lambda_{t-1},\mathcal{G},\thetab^{*}(\lambda_{t-1}))$\;
  \If{$\mathcal{T} \neq \emptyset$}{
  $\lambda' = \lambda_{t-1}$\;
  break\;
  }
 }
 }
 \caption{Finding a small $\lambda$ that sets all coefficients to zero}
 \label{alg:lambdamax}
\end{algorithm}

\paragraph{Remark}
For nonoverlapping group lasso,
we find $\lambda_{max}$ 
(the smallest $\lambda$ that sets $\bbeta^* = {\bf 0}$) 
as follows:
$\lambda_{max} = \max_{\bg \in \mathcal{G}}{\frac{1}{\sqrt{n_\bg}}\left\|\bX_{\bg}^T\by\right\|_2}$.
However,  this is  
not necessarily the smallest $\lambda$ for zero solutions for overlapping group lasso.
In fact, it is nontrivial to compute $\lambda_{max}$ for overlapping group lasso
because different groups are coupled through overlaps, preventing 
us from using the simple equation above. 
Instead, using a screening algorithm, we can find a
small $\lambda$ that sets all coefficients to zero, denoted by $\lambda'$.
The key idea is that we decrease $\lambda$
following a sequence
until all coefficients are discarded by a screening algorithm.
We denote $\lambda'$
by the smallest $\lambda$ that sets $\bbeta^* = \bf{0}$ in our regularization path.
This technique is summarized in Algorithm \ref{alg:lambdamax}
with a geometric sequence of $\lambda$ parameters.

\section{Experiments}

We demonstrate the efficiency of the proposed screening algorithms
in terms of the screening rejection ratio 
and the speed.
The rejection ratio is defined by the
ratio of discarded 
coefficients to true zero coefficients, obtained by 
an overlapping group lasso solver without screening.
Here we refer Algorithm \ref{alg:screening} to 
OLS 
and Algorithm \ref{alg:sogrp_screening} to
SOLS. 

To the best of our knowledge, there are no existing 
screening rules for overlapping group lasso;
however, we showed that 
GDPP can also be used
for overlapping group lasso as an exact screening rule.
Therefore, as a baseline
screening algorithm, we used GDPP. Comparing 
OLS\footnote{We 
set the window size to 50 
to search for the overlapping groups in $\bar{\mathcal{G}}_1$.} and SOLS against GDPP, we  investigate the benefits of
using overlapping groups for screening because GDPP 
makes no use of  overlapping groups.

We used the following three image datasets 
\footnote{from http://www.csie.ntu.edu.tw/$\sim$cjlin/libsvmtools/datasets/}
and one genome dataset for our experiments,
chosen to cover the cases where
$N >> J$, $N<<J$, and $N \approx J$, and a real-world application 
in genetics. 
{\bf a)} The PIE image 
database \cite[]{sim2002cmu}, which contains
11,554 face images of 68 people under different poses, illumination conditions, and expressions. 
The images are represented by $\bD \in \mathbb{R}^{11554 \times 1024}$.
We generated the response vector $\by \in \mathbb{R}^{11554 \times 1}$  by randomly choosing a feature
in $\bD$, and the rest of the features are concatenated to be the design matrix 
$\bX  \in \mathbb{R}^{11554 \times 1023}$.
{\bf b)} The Alzheimer's disease (AD) dataset \cite[]{zhang2013integrated}, 
which contains
541 AD individuals, represented by 
511,997  single nucleotide polymorphisms (the most common 
genetic variants).
For the same individuals, the AD dataset contains 
40,638 expression levels for known and predicted genes,
splice variants,
miRNAs, and non-coding RNA sequences in 
the brain region of the cerebellum.
We used the genetic information for
$\bX  \in \mathbb{R}^{541 \times 511997}$, and
randomly choose a gene expression for
$\by  \in \mathbb{R}^{541 \times 1}$.
This is a typical experimental setting for 
expression quantitative mapping \cite[]{lee2012leveraging}, where the goal is 
to identify genetic variants that affect 
gene expression levels.
{\bf c)} The COIL database \cite[]{nayar1996columbia}  contains 
color images of 100 objects. For each object, 
72 images are taken with different angles.
We selected object 10, whose image data are represented by 
$\bD \in \mathbb{R}^{72 \times 49152}$. 
$\bX$ and $\by$ are generated in the same way as
the PIE dataset.
{\bf d)} The digit recognition data (GISETTE) \cite[]{guyon2004result}, 
which contains 6,000 digit images of
four and nine. Each sample contains
5,000 features, where $70\%$ of features are constructed from
MNIST dataset \cite[]{lecun1998gradient}, and $30\%$ 
of them are artificially generated following a 
distribution of true features, resulting in 
$\bX  \in \mathbb{R}^{6000 \times 5000}$. 
The response vector
$\by \in \mathbb{R}^{6000 \times 1}$ consists of binary labels, 
indicating the class (either four or nine)
of each sample. 

Furthermore, we test screening algorithms on the sparse
overlapping group lasso in 
\eqref{equ:sparse_og_lasso} with $\lambda_1 = \lambda_2$
under different group structures.
We chose this model 
because 
it  induces individual level sparsity, allowing us to 
capture complex patterns of nonzero coefficients.
Moreover, we generated group structures based on feature locality
because features located nearly are often associated with 
an output vector jointly
in image or genome datasets.
The group structures used in our experiments are as follows:

\begin{enumerate}
\item  {\bf $\ell_1$ + nonoverlap groups}:
$\mathcal{G}$ contains nonoverlapping groups, where 
each group consists of 20 consecutive features.

\item {\bf $\ell_1$  + tree structure groups}:
$\mathcal{G}$ contains tree structured groups with four levels, where 
from the root to the leaves, the groups  consist of 20/15/10/5 consecutive features.
A parent group is always super set of their children groups.

\item {\bf $\ell_1$  + overlap groups}:
$\mathcal{G}$ contains overlapping groups, in which 
each group contains 20 features and consecutive groups overlap by 5 features.

\item {\bf $\ell_1$  + overlap groups guided by prior knowledge}:
This group structure is constructed only 
for AD dataset. $\mathcal{G}$ contains 26,222 
groups, in which 
each group contains
single nucleotide polymorphisms (i.e., features in  AD dataset) located on the same gene
region, defined by the interval between the gene's transcription start and end site.
Note that groups may overlap due to overlapping genes. 

\end{enumerate}

We solved the overlapping group lasso problems with 
OLS, SOLS, and GDPP
on a sequence of 
$\{0.9^1  \lambda', 0.9^2 \lambda', \ldots, 0.9^{30} \lambda'\}$,
where $\lambda'$ was obtained by Algorithm \ref{alg:lambdamax}.
In the sequential screening, we stop using screening from 
$\lambda_{t+1}$ 
if 
no features are  discarded at $\lambda_t$ ($1 \leq t \leq 29$)
because it is likely that 
screening with $\lambda < \lambda_t$   
discards a few features, if at all.
All the experiments except GISETTE (which contains  a single $\by$)  were repeated 
10 times with a different $\by$ for each run, 
and we report the average performance. 
As an overlapping group lasso solver, we used FoGLasso \cite[]{yuan2011efficient},
a state-of-the-art solver in the SLEP 
package \cite[]{liu2009slep}, and all screening rules were implemented in Matlab.
Below, we our demonstrate empirical results under different datasets,
different group structures, and different group sizes to study 
how the screening efficiency changes under various scenarios.

\subsection{Different Datasets}
\label{subsec:screening_efficiency}
\begin{figure}[t!]
\begin{center}
\includegraphics[height=2.3in]{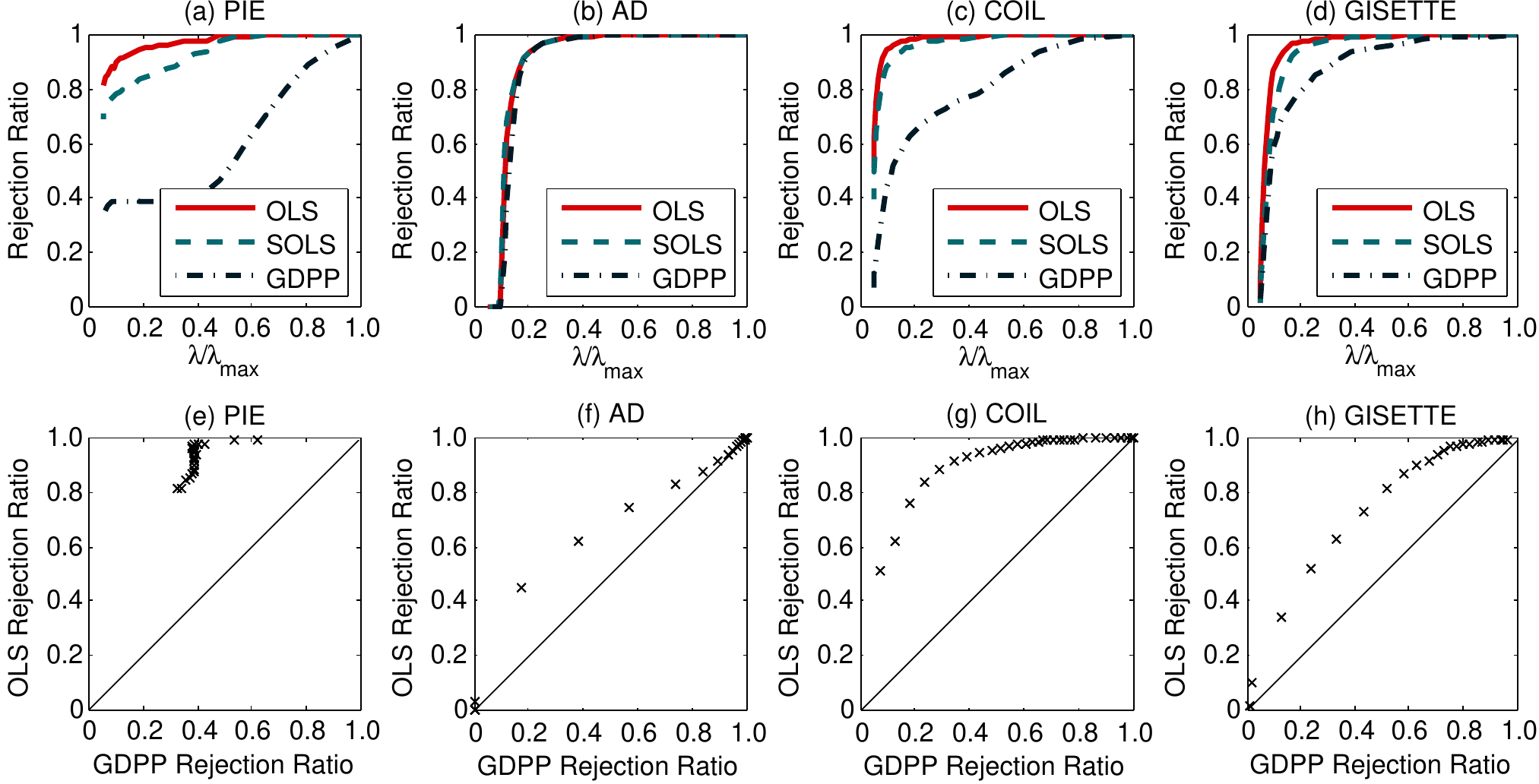} 
\caption{
Rejection ratio of OLS (overlapping group lasso screening), SOLS (sparse overlapping group lasso screening) and GDPP (group dual polytope projection) and their comparison on  (a,e) PIE, (b,f) COIL, (c,g)
AD, 
and (d,h) GISETTE datasets for different $\lambda/ \lambda_{max}$ parameters.} 
\label{fig:lambda_seq} 
\end{center}
\end{figure}

\begin{figure}[t]
\begin{center} 
\includegraphics[height=2.3in]{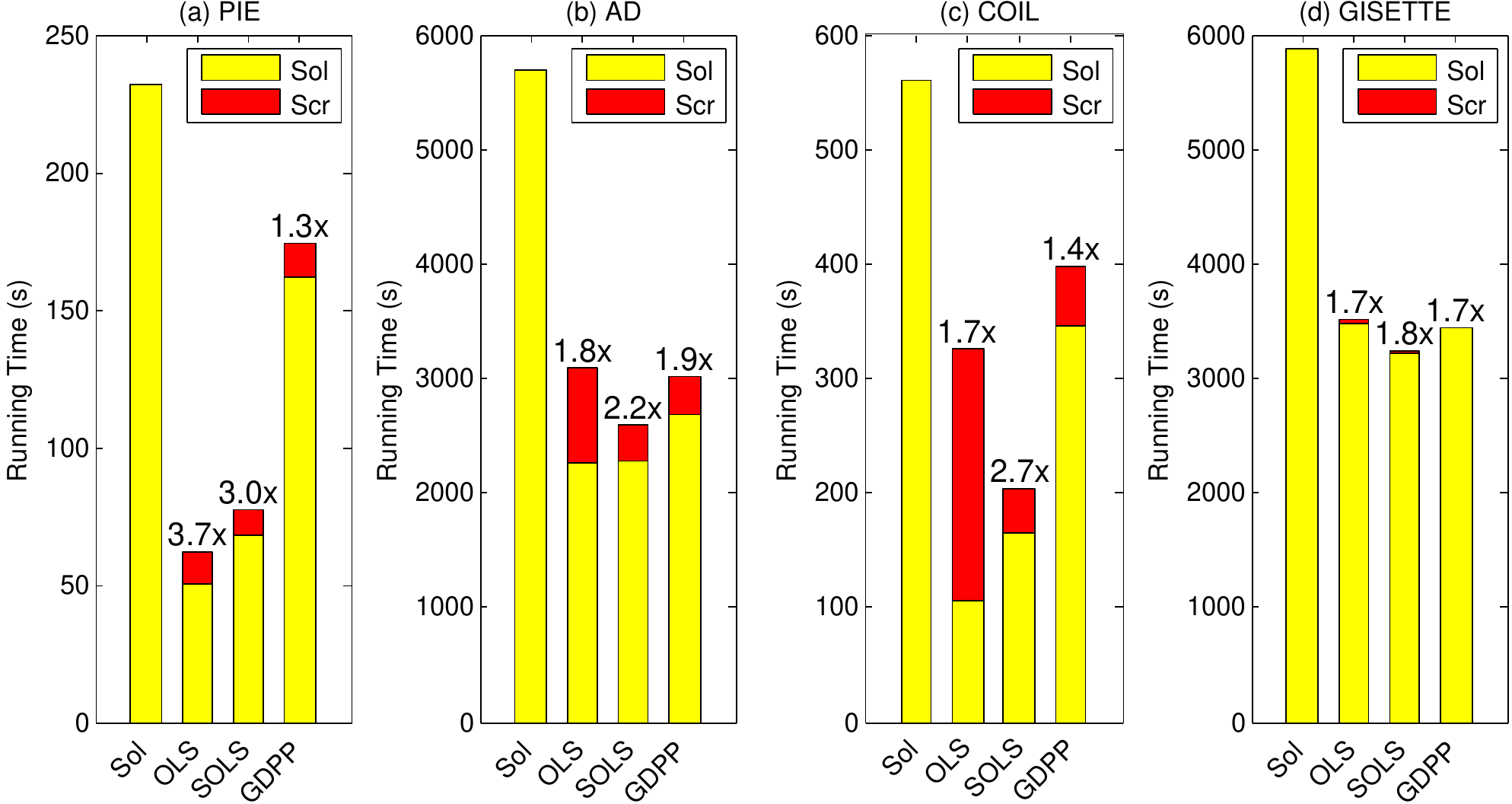} 
\caption{Running times of overlapping group lasso solver
without screening (Sol), solver with OSL screening (OSL),
solver with SOSL screening (SOSL), 
and solver
with GDPP screening (GDPP) on (a) 
PIE, (b) AD, (c) COIL, and (d) GISETTE datasets. } 
\vspace{-.4cm}
\label{fig:lambda_seq_speed} 
\end{center}
\end{figure}

We first investigate the effectiveness of screening algorithms
in terms of 
the screening rejection ratio under the four different datasets.
For this experiment, we used the 
$\ell_1$  + tree structure groups.
Figures \ref{fig:lambda_seq} (a--d) show the rejection ratio  of OLS, SOLS, 
and GDPP on the four datasets as we change $\lambda$ parameter. 
Figures \ref{fig:lambda_seq} (e--h)
 demonstrate 
the differences between screening rejection ratios of the
OLS and those of GDPP; each dot 
represents a result for a single $\lambda$, and the distance 
between dots and the
diagonal line denotes the increased rejection ratio by
OLS's use of overlapping groups.

For all datasets except AD dataset, OLS and SOLS  reject significantly more features than GDPP,
taking advantage of overlapping groups to decrease the left-hand side of 
screening rules. However, given the AD dataset,
the performance gap between OLS (or SOLS) and GDPP 
was the smallest. This phenomenon is due to the small groups
used for the AD dataset. The median of AD group sizes was 8; 
in contrast,
the other datasets used the groups, sizes of 20/15/10. 
In the experiments with different group sizes in $\S$\ref{subset:exp_group_size},
we confirm that
the performance gap between OLS (or SOLS) and GDPP increases as the group size increases.
This is not surprising because as group size increases, the number of free variables 
to minimize the left-hand side of screening rules increases, resulting in a better screening rejection 
ratio. 
Note that OLS and SOLS outperform GDPP for all $N>>J$ (PIE), $N<<J$ (COIL),
and $N\approx J$ (GISETTE) settings. This observation suggests to us that 
OLS and SOLS would be useful for all large datasets regardless of their sample or
feature sizes. 
Furthermore,  OLS and SOLS show similar rejection ratios for all datasets.
Therefore, in a single-machine setting, if $\ell_1$ norm is included in the regularization, SOLS would be preferred over OLS
due to SOLS's low computational complexity as well as its rejection ratio  comparable to that of OLS.

We then measured the speed gain achieved by the screening rules 
on the four different datasets.
In Figure \ref{fig:lambda_seq_speed}, we compare the running
times of solving overlapping group lasso problems given a sequence of $\lambda$
parameters
among the solver without screening, the solver with OLS, the solver with SOLS, 
and the solver with OGDPP.
For the results with screening, we also illustrate the 
portion of running
times consumed by the solver and by screening.
For the PIE, AD, COIL, and GESETTE datasets, we observed $3\times$,
$2.2\times$,  $2.7\times$, and $1.8\times$ speed-up by SOLS
in comparison to the solver without screening.
One interesting observation is that 
for $N>>J$ setting such as in the PIE dataset, running times of screening with
OLS and SOLS are similar; but 
for large $J$ such as in the COIL dataset, OLS is significantly more expensive 
than SOLS. As a result, the solver with OLS was slower than 
the solver with SOLS, even though OLS discarded more features than 
SOLS. 
However, we note that screening rules are embarrassingly parallel.
Thus, in a distributed setting, OLS would be appealing because
the screening portion of running times can be reduced 
proportionally to the number of cores.
Hereafter, we show only the screening rejection ratio by 
different screening algorithms because
the similar patterns of running times are observed under the other 
experimental settings.

\subsection{Different Group Structures}

\begin{figure}[tb]
\begin{center} 
\includegraphics[height=2.3in]{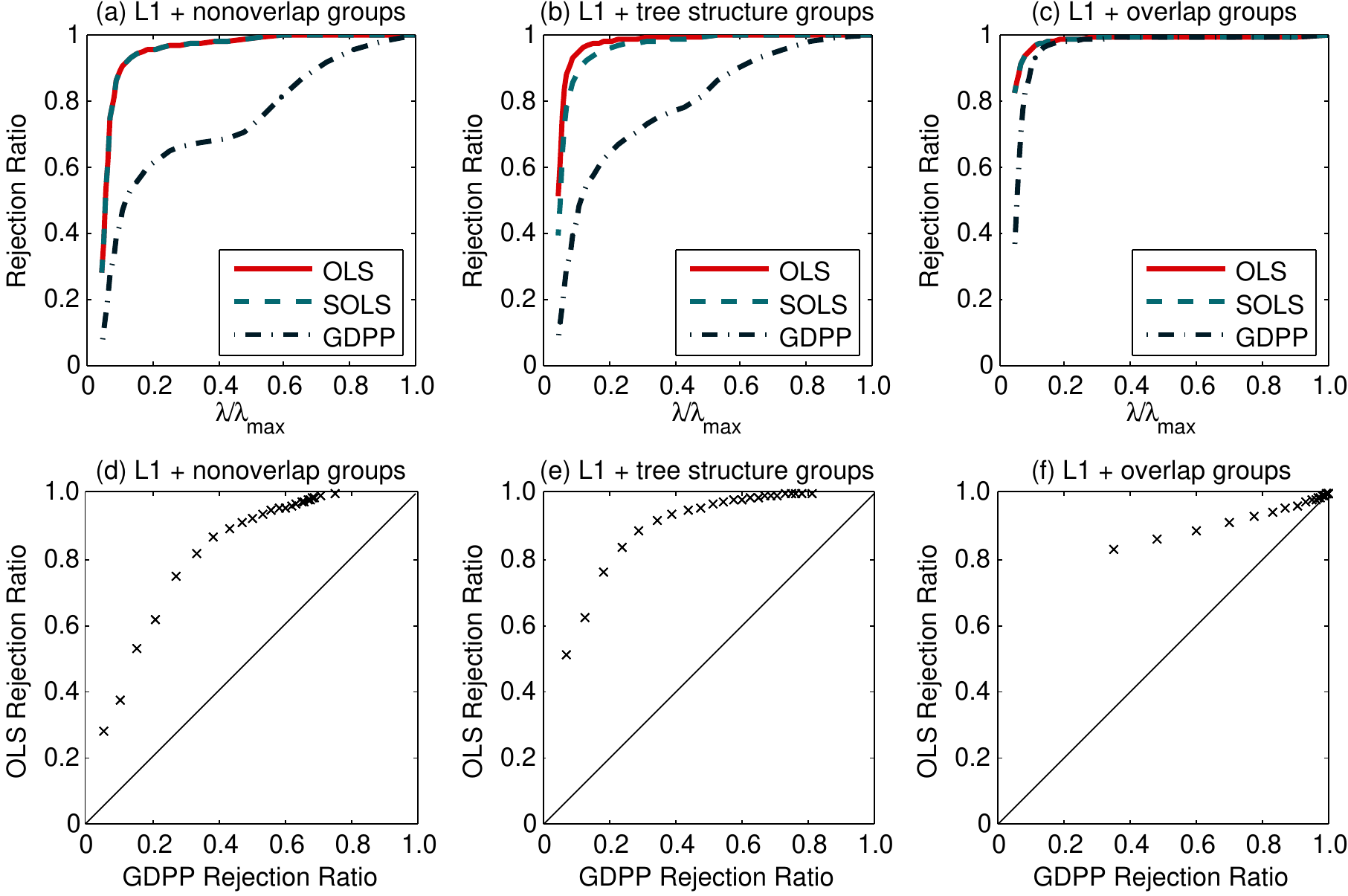} 
\caption{Rejection ratio of OLS, SOLS, and GDPP with different group structures:
(a,d) $\ell_1$ + nonoverlap groups, (b,e) $\ell_1$ + tree structure groups, and 
(c,f) $\ell_1$ + overlap groups.} 
\vspace{-.4cm}
\label{fig:lambda_seq_struct} 
\end{center}
\end{figure}

Next, we run the screening algorithms on the COIL dataset 
under three different group structures, including
$\ell_1$ + nonoverlap groups, $\ell_1$ + tree structure groups, and
$\ell_1$ + overlap groups.
Figures \ref{fig:lambda_seq_struct}(a--c) show the rejection ratios of
different algorithms 
as $\lambda$ changes given different group structures; we compare
the rejection ratio of SOLS with that of GDPP 
below each corresponding plot.
Overall, we can see that OLS and SOLS maintain high rejection ratios
over a wide range of $\lambda$ parameters, but 
GDPP's rejection ratio drops quickly as $\lambda$ decreases.
The rejection ratios by OLS and SOLS
were identical under 
$\ell_1$ + nonoverlap groups and 
$\ell_1$ + overlap groups because
only individual coefficients 
due to $\ell_1$ penalty are inclusive groups for 
both  screening methods.
Interestingly, even under $\ell_1$ + tree structure groups,
OLS and SOLS show similar rejection ratios, which indicates that 
use of  $\ell_1$ penalty for $\bar{\mathcal{G}}_1$ is  effective.

\subsection{Different Group Sizes}
\label{subset:exp_group_size}

\begin{figure}[htb]
\begin{center}\vspace{-.2cm}
\includegraphics[height=2.3in]{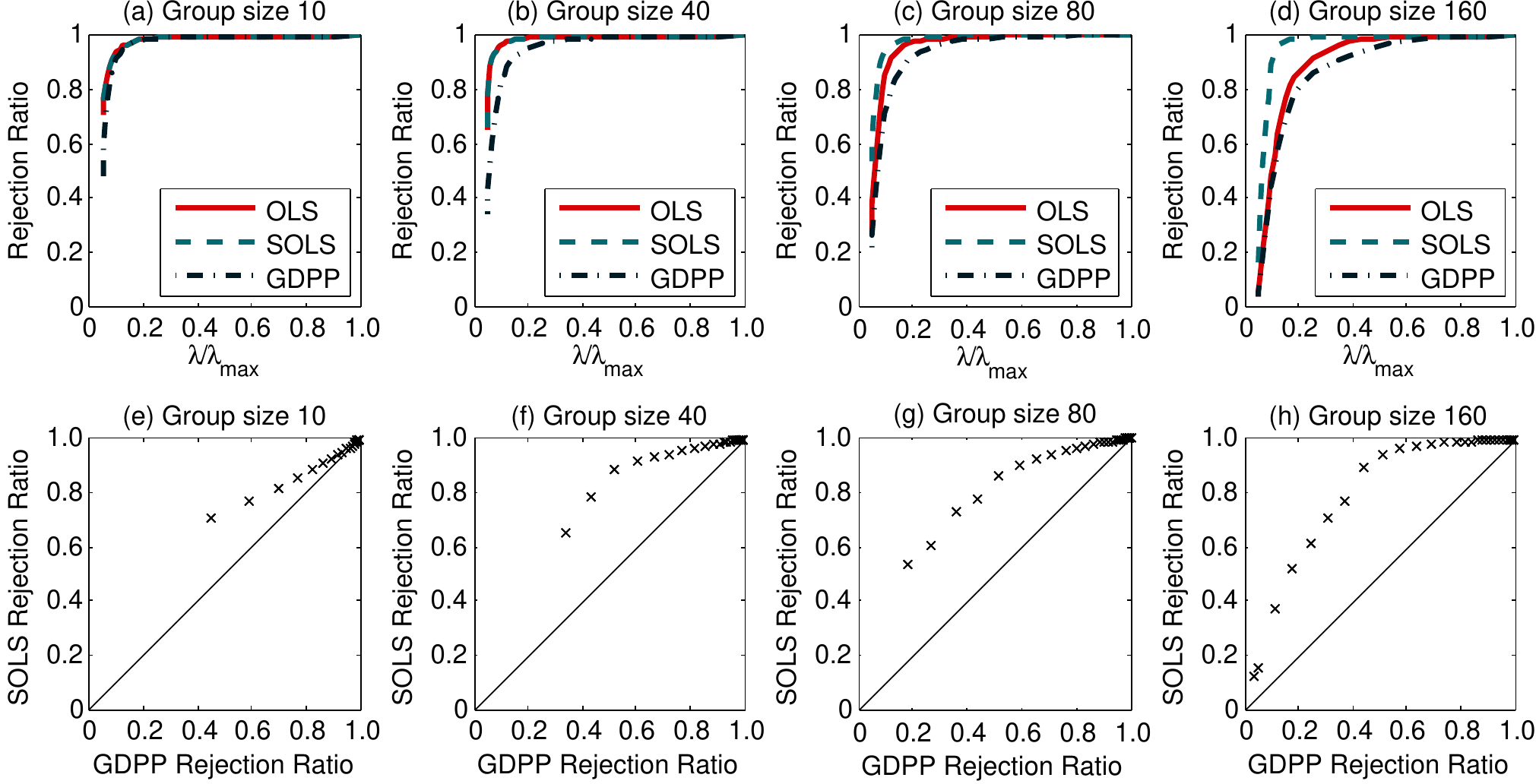} 
\caption{Rejection ratio of OGDPP and GPP with different sizes of overlapping groups.
The sizes of groups are (a) 10 , (b) 40, (c) 80, and (d) 160 features.} 
\label{fig:lambda_grsize} 
\end{center}
\end{figure}

We also observed the effects of group sizes on the screening rejection ratio.
For this experiment, we used the COIL dataset and $\ell_1$ + overlap groups, 
where the size of groups is changed from 10 to 160.
Figure \ref{fig:lambda_grsize} shows the screening efficiency 
of GDPP and OGDPP given different group sizes. 
Clearly, the rejection ratio of GDPP decreases
as the group size increases. 
However, SOLS was not affected by the increased group sizes
because SOLS 
can take advantage of the increased number of 
overlapping groups, $\bar{\mathcal{G}}_1 = \{j : j \in \bg \}$, within each tested group $\bg$.
As the number of overlapping groups increases, 
the number of
free variables to optimize in the left-hand side 
of SOLS rule 
also increases. 
Therefore, SOLS kept the high rejection ratios given all group sizes.
The rejection ratio of OLS is decreased for group sizes $\geq 80$
because we fixed the window size to search for $\bar{\mathcal{G}}_1$ by 50.
Thus, OLS's rejection ratio started to decline from the group size of 80
due to the fixed  number of
overlapping groups in $\bar{\mathcal{G}}_1$.

\section{Conclusion}
In this paper, we developed screening rules including
OLS and SOLS for overlapping group lasso.
We make it possible to screen each group independently of each other
by considering only groups  inclusive of each tested group.
Taking advantage of groups
 that overlap with tested groups,
we showed that OLS and SOLS 
are efficient in terms of the screening rejection ratio.
In addition, we verify that the  
GDPP screening rule  \cite[]{wang2013lasso} 
is a special case of OLS, but it is less efficient than OLS because it lacks
the capability to use overlapping groups.
In OLS, there is a step to find all groups overlapped with the group 
of interest, which can be computationally heavy. Thus, 
as a special case of OLS, we also present SOLS for
sparse overlapping group lasso that includes $\ell_1$ penalty. 
In our experiments, SOLS was much faster than OLS for screening, 
while maintaining its rejection ratio comparable to that of OLS. 
Motivated by enhanced DPP \cite[]{wang2013lasso}, which uses
a tight range of dual optimal solutions, developing more efficient OLS or SOLS would be an interesting research
direction. Furthermore, extending OLS to various loss functions such 
as logistic loss or hinge loss is left for future research.


\appendix

\section{Dual Formulation of Overlapping Group Lasso}

Overlapping group lasso problem is defined by
\begin{equation}
\label{equ:general_supp}
\min_{\bbeta} \frac{1}{2}
\lVert \by - \bX\bbeta  \rVert_2^2 
+ \lambda \sum_{\bg\in \mathcal{G}} \sqrt{n_\bg} \left\| \bbeta_{\bg} \right\|_2, 
\end{equation}
where $\bX \in \mathbb{R}^{N \times J}$ is the input data for  $J$ inputs and $N$ samples, 
$\by \in \mathbb{R}^{N \times 1}$ is the output vector, 
$\bbeta \in \mathbb{R}^{J \times 1}$ is the vector of regression coefficients,
$n_\bg$ is the size of group $\bg$, and 
$\lambda$ is a regularization parameter that determines the sparsity of $\bbeta$.
Here, we convert the primal overlapping group lasso problem in \eqref{equ:general_supp}  to a dual problem.

Introducing $\bz = \by - \bX \bbeta$,  \eqref{equ:general_supp} can be written as
\begin{align}
\label{eq:overlap_g_lasso2_supp}
\min_{\bbeta} & \frac{1}{2} \left\| \bz  \right\|_2^2 + 
\lambda \sum_{\bg\in \mathcal{G}}\sqrt{n_{\bg}} \left\| \bbeta_{\bg} \right\|_2 \\
\mbox{subject to } & \bz = \by - \bX \bbeta. \nonumber
\end{align}

Lagrangian of \eqref{eq:overlap_g_lasso2_supp} is 
\begin{equation}
L(\bbeta,\bz,\etab) =  \frac{1}{2} \left\| \bz  \right\|_2^2 + \lambda  \sum_{g\in \mathcal{G}} 
\sqrt{n_{\bg}} \left\| \bbeta_g \right\|_2 
+\mathbf{\etab}^T \left( \by - \bX \bbeta - \bz \right).
\label{eq:overlap_g_lasso3_supp}
\end{equation}

Dual function $g(\etab)$ of \eqref{eq:overlap_g_lasso3_supp} is
{\footnotesize
\begin{equation}
\begin{split}
g(\etab) &= \textrm{inf}_{\bbeta, \bz} L(\bbeta,\bz,\etab) \\
& = \etab^T \by 
+ \textrm{inf}_{\bbeta} \left( - \etab^T \bX   \bbeta + \lambda \sum_{g\in \mathcal{G}} \sqrt{n_{\bg}} \left\| \bbeta_g \right\|_2  \right)
+ \textrm{inf}_{\bz} \left( \frac{1}{2} \left\| \bz \right\|_2^2 - \etab^T \bz  \right).
\label{eq:overlap_dual}
\end{split}
\end{equation}
}

To obtain $g(\etab)$, 
we solve the following two optimization problems:
\begin{equation}
\label{eq:dual_first}
\textrm{inf}_{\bbeta} \left( - \etab^T  \bX  \bbeta + \lambda \sum_{g\in \mathcal{G}} \sqrt{n_{\bg}} \left\| \bbeta_g \right\|_2  \right)
\end{equation}
and
\begin{equation}
\label{eq:dual_second}
\textrm{inf}_{\bz} \left( \frac{1}{2} \left\| \bz \right\|_2^2 - \etab^T \bz  \right).
\end{equation}

We first solve \eqref{eq:dual_first}. Let us denote
$f_1(\bbeta) \equiv \left( - \etab^T \bX     \bbeta  + \lambda \sum_{g\in \mathcal{G}} \sqrt{n_{\bg}}  \left\| \bbeta_g \right\|_2  \right)$.
A subgradient of $f_1(\bbeta)$ with respect to $\bbeta$ is
\begin{equation}
 \frac{\partial f_1(\bbeta)}{\partial \bbeta} = - \bX^T \etab  + \lambda  \bv, 
\end{equation}
where $\bv$ is a subgradient of 
$\sum_{\bg\in \mathcal{G}} \sqrt{n_\bg} \left\| \bbeta_{\bg} \right\|_2$ 
with respect to $\bbeta$. 
The $j$-th element of $\bv$ 
is given by
\begin{equation}
\label{eq:subgrad_supp}
v_j = 
 \sum_{\{\bg: j \in \bg, \bbeta_\bg \neq {\bf 0}, \bg\in \mathcal{G}\}} 
   \frac{\sqrt{n_{\bg}}\beta_{j}}{\left\| \bbeta_{g} \right\|_2}  +
\sum_{\{\bg: j \in \bg, \bbeta_\bg = {\bf 0}, \bg\in \mathcal{G}\}} 
    \sqrt{n_{\bg}} o_j,
\end{equation}
where 
$o_j$ is a subgradient that satisfies $\left\| \bo_\bg \right\|_2 \leq 1$, where
$j \in \bg$.
To obtain an optimal $\bbeta^{*}$ for \eqref{eq:dual_first}, 
we set  $\frac{\partial f_1(\bbeta)}{\partial \bbeta} = 0$. Then, we have
\begin{equation}
\label{eq:constraint_supp}
\bX^T \etab  = \lambda  \bv. 
\end{equation}

Plugging  
it into $f_1(\bbeta)$, we get
{\scriptsize 
\begin{align}
\label{eq:f1_derive}
f_1(\bbeta) 
&= -\lambda \bv^T\bbeta 
+ \lambda \sum_{\bg\in \mathcal{G}} \sqrt{n_\bg} \left\| \bbeta_\bg \right\|_2  \\
&= -\lambda 
\sum_j \sum_{\{\bg: j \in \bg, \bbeta_\bg \neq {\bf 0}, \bg\in \mathcal{G}\}}
v_j \beta_j
+ \lambda \sum_{\{\bg: \bbeta_\bg \neq \mathbf{0}, \bg\in \mathcal{G}\}}  \sqrt{n_\bg}  \left\| \bbeta_\bg \right\|_2  \\
&=    -\lambda 
\sum_j \sum_{\{\bg: j \in \bg, \bbeta_\bg \neq {\bf 0}, \bg\in \mathcal{G}\}}
     \frac{\sqrt{n_\bg}(\beta_{j})^2}{\left\| \bbeta_{\bg} \right\|_2}  
+ \lambda \sum_{\{\bg: \bbeta_\bg \neq \mathbf{0}, \bg\in \mathcal{G}\}}  \sqrt{n_\bg}  \left\| \bbeta_\bg \right\|_2  \label{eq:f1_derive_3}\\
&=    -\lambda  \sum_{\{\bg: \bbeta_\bg \neq 0, \bg\in \mathcal{G}\}} \sum_{j \in \bg}
     \frac{\sqrt{n_\bg}(\beta_{j})^2}{\left\| \bbeta_{\bg} \right\|_2}  
+ \lambda \sum_{\{\bg: \bbeta_\bg \neq \mathbf{0}, \bg\in \mathcal{G}\}}  \sqrt{n_\bg}  \left\| \bbeta_\bg \right\|_2   \\
&= 0. 
\end{align}
}
Here \eqref{eq:subgrad} is used for \eqref{eq:f1_derive_3}.
Therefore, $\textrm{inf}_{\bbeta} f_1(\bbeta) = 0$.

Now we solve the second problem $f_2(\bz) \equiv   \frac{1}{2} \left\| \bz \right\|_2^2 - \etab^T \bz   $.
This result was presented in \cite[]{wang2013lasso}. However, here we show the derivation for self-containedness. 
\begin{align}
f_2(\bz) =  \frac{1}{2} \left\| \bz \right\|_2^2 - \etab^T \bz   =
 \frac{1}{2} \left( \left\| \bz -\etab \right\|_2^2 - \left\| \etab \right\|_2^2  \right). 
\end{align}
Note that  $\etab  = \argmin_{\bz}  \frac{1}{2} \left( \left\| \bz -\etab \right\|_2^2 - \left\| \etab \right\|_2^2  \right)$,
and thus $\textrm{inf}_{\bz} f_2(\bz) = -\frac{1}{2} \left\| \etab \right\|_2^2$.

Based on these results for \eqref{eq:dual_first}
and \eqref{eq:dual_second}, the dual function $g(\etab)$ 
in \eqref{eq:overlap_dual} is given by,
\begin{equation}
\label{eq:dual_g}
g(\etab) = \etab^T \by - \frac{1}{2} \left\| \etab \right\|_2^2 = 
\frac{1}{2} \left\| \by \right\|_2^2  -\frac{1}{2} \left\| \etab - \by \right\|_2^2.
\end{equation}
Finally, we denote $\thetab = \frac{\etab}{\lambda}$.
Combining \eqref{eq:dual_g} with \eqref{eq:constraint_supp}, 
a dual formulation of overlapping group lasso is as follows:
\begin{align}
\label{eq:dual_formul_supp}
& \textrm{sup}_{\thetab}  \frac{1}{2} \left\| \by \right\|_2^2  -\frac{\lambda^2}{2} \left\| \thetab - \frac{\by}{\lambda} \right\|_2^2 \\
& \mbox{subject to }   \bX^T \thetab   =   \bv. \nonumber
\end{align}

\section{Proof of Theorem 1}
\label{appendix:proof_theorem1}
\setcounter{theorem}{0}
\begin{theorem}
For the overlapping lasso problem, suppose that we are given an optimal 
dual solution
$\thetab^{*}(\lambda_0)$. 
Then for $\lambda < \lambda_0$, $\bbeta_\bg^*(\lambda) = \mathbf{0}$ if
{\scriptsize  
\begin{equation}
\label{eq:theorem_screen_supp}
 \min_{\bw_\bh, \left\| \bw_\bh \right\|_2 \leq 1}   
\sqrt{ \sum_{j \in \bg}
 \left(
\bx_j^T \thetab^*(\lambda_0)    
   - \sum_{j\in \bh, \bh \in \bar{\mathcal{G}}_1} \sqrt{n_\bh} w_j  
   \right)^2}
   < \sqrt{n_\bg} -  \left\| \bX_\bg \right\|_F \left\| \by \right\|_2  \left| \frac{1}{\lambda} - \frac{1}{\lambda_0} \right|.
\end{equation}
}
\label{theorem:screen_supp}
\end{theorem}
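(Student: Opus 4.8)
The plan is to show that the left-hand side of \eqref{eq:theorem_screen_supp}, after adding back the correction term appearing on its right-hand side, is an upper bound on the quantity $b'_\bg$ from \eqref{eq:mini_opt_problem2}. Since the discussion following \eqref{eq:mini_opt_problem2} already established the implication $b'_\bg < \sqrt{n_\bg} \Rightarrow \bbeta_\bg^*(\lambda) = \mathbf{0}$, proving this bound finishes the argument. Recall that the non-expansiveness of $P_\bF$ in \eqref{eq:nonex} gives $\thetab^*(\lambda) = \thetab^*(\lambda_0) + \br$ with $\|\br\|_2 \leq \rho$ and $\rho = \|\by\|_2\,|\tfrac{1}{\lambda} - \tfrac{1}{\lambda_0}|$, so the entire task reduces to controlling how the inner minimization responds to the perturbation $\br$ and then taking the outer supremum.

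First I would introduce the shorthand
\[
g_\bg(\thetab) \equiv \min_{\bw: \|\bw_\bh\|_2 \leq 1} \left\| \bX_\bg^T \thetab - M\bw \right\|_2,
\]
where $M$ is the fixed linear map sending $\bw$ to the vector whose $j$-th entry is $\sum_{j\in\bh,\bh\in\bar{\mathcal{G}}_1}\sqrt{n_\bh}\,w_j$, so that the inner minimization in \eqref{eq:theorem_screen_supp} is exactly $g_\bg(\thetab^*(\lambda_0))$. The structural observation that drives the proof is that $g_\bg(\thetab)$ is the Euclidean distance from the point $\bX_\bg^T\thetab$ to the set $\mathcal{C}\equiv\{M\bw : \|\bw_\bh\|_2\leq 1\ \forall\bh\}$, which is the linear image of a product of Euclidean balls and hence compact and convex. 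The distance to a convex set is $1$-Lipschitz in its argument, so for any $\thetab_1,\thetab_2$ one has $|g_\bg(\thetab_1) - g_\bg(\thetab_2)| \leq \|\bX_\bg^T(\thetab_1 - \thetab_2)\|_2$; this single inequality is what tames the outer $\sup_\br$.

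Applying the Lipschitz bound with $\thetab_1 = \thetab^*(\lambda_0) + \br$ and $\thetab_2 = \thetab^*(\lambda_0)$ gives
\[
g_\bg(\thetab^*(\lambda_0) + \br) \;\leq\; g_\bg(\thetab^*(\lambda_0)) + \|\bX_\bg^T\br\|_2 \;\leq\; g_\bg(\thetab^*(\lambda_0)) + \|\bX_\bg\|_F\,\|\br\|_2,
\]
where the last step uses $\|\bX_\bg^T\br\|_2 \leq \|\bX_\bg\|_{\mathrm{op}}\|\br\|_2 \leq \|\bX_\bg\|_F\|\br\|_2$. Taking the supremum over $\|\br\|_2\leq\rho$ yields $b'_\bg \leq g_\bg(\thetab^*(\lambda_0)) + \|\bX_\bg\|_F\,\rho$, and substituting $\rho = \|\by\|_2\,|\tfrac{1}{\lambda}-\tfrac{1}{\lambda_0}|$ shows the hypothesis \eqref{eq:theorem_screen_supp} is precisely $g_\bg(\thetab^*(\lambda_0)) + \|\bX_\bg\|_F\|\by\|_2\,|\tfrac{1}{\lambda}-\tfrac{1}{\lambda_0}| < \sqrt{n_\bg}$, i.e.\ $b'_\bg < \sqrt{n_\bg}$, whence $\bbeta_\bg^*(\lambda)=\mathbf{0}$.

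I expect the only genuinely delicate point to be the second paragraph: recognizing the inner minimization as a distance-to-a-convex-set so that the clean $1$-Lipschitz property applies, rather than bounding term by term (which would both be messier and lose the tidy $\|\bX_\bg\|_F$ constant). Once that is in place, everything else is the ``little bit of algebra'' the statement alludes to — a triangle inequality for the distance function, the operator-norm $\leq$ Frobenius-norm estimate, and the evaluation of $\rho$ from \eqref{eq:nonex}.
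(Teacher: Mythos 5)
Your proposal is correct and follows essentially the same route as the paper's proof: both decompose $\thetab^*(\lambda)=\thetab^*(\lambda_0)+\br$ with $\left\|\br\right\|_2\leq\rho$ via the non-expansiveness of $P_{\bF}$, peel off the perturbation term, and bound $\left\|\bX_\bg^T\br\right\|_2$ by $\left\|\bX_\bg\right\|_F\left\|\by\right\|_2\left|\frac{1}{\lambda}-\frac{1}{\lambda_0}\right|$. Your ``distance to a convex set is $1$-Lipschitz'' observation is exactly the paper's Minkowski-inequality step (\ref{eq:screen4}) applied inside the minimization, so the two arguments coincide up to phrasing.
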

\begin{proof}
Based on \eqref{eq:nonex}, 
we have a sphere $\Thetab$
that contains $\thetab^*(\lambda)$, which is 
centered at 
$\thetab^*(\lambda_0)$ with a radius of 
$\rho = \left\| \frac{\by}{\lambda} - \frac{\by}{\lambda_0} \right\|_2$.
Thus, we can represent $\thetab^*(\lambda) = \thetab^*(\lambda_0) + \br$,
where $\left\| \br \right\|_2 \leq  \rho$.
Plugging it into \eqref{eq:mini_opt_problem} we get
{\scriptsize 
\begin{align}
b_\bg 
&\leq   \min_{\bw_\bh: \left\| \bw_\bh \right\|_2 \leq 1}  
\sqrt{\sum_{j \in \bg} 
\left(  \bx_j^T \thetab^*(\lambda_0) + \bx_j^T\br
   - \sum_{j\in \bh, \bh \in \bar{\mathcal{G}}_1} \sqrt{n_\bh} w_j  \right)^2} 
     \label{eq:screen2}\\
 &  \leq    \min_{\bw_\bh: \left\| \bw_\bh \right\|_2 \leq 1} 
 \sqrt{ \sum_{j \in \bg} 
\left(  \bx_j^T \thetab^*(\lambda_0)
   - \sum_{j\in \bh, \bh \in \bar{\mathcal{G}}_1} \sqrt{n_\bh} w_j  
   \right)^2} + 
 \sqrt{\sum_{j \in \bg}    \left(\bx_j^T \br \right)^2} 
     \label{eq:screen4}\\
&   \leq    \min_{\bw_\bh: \left\| \bw_\bh \right\|_2 \leq 1}   
\sqrt{\sum_{j \in \bg} \left(
 \bx_j^T \thetab^*(\lambda_0)
   - \sum_{j\in \bh, \bh \in \bar{\mathcal{G}}_1} \sqrt{n_\bh} w_j  
  \right)^2} +  
  \sqrt{\left\| \bX_\bg \right\|_F^2 \left\|\by\right\|_2^2 
  \left( \frac{1}{\lambda} - \frac{1}{\lambda_0} \right)^2}
  \label{eq:screen5}.
\end{align}
}
We used 
Minkowski's inequality for \eqref{eq:screen4}, and
Cauchy-Schwarz inequality and
$\left\| \bX_\bg \right\|_2^2 \leq \left\| \bX_\bg \right\|_F^2$ for \eqref{eq:screen5}.
From \eqref{eq:mini_opt_problem}, if $b_\bg< \sqrt{n_\bg}$,
then $\bbeta_\bg^* = {\bf 0}$, and the result follows.
\end{proof}


\begin{thebibliography}{27}

\bibitem[\protect\citeauthoryear{Bach et~al.}{2012}]{bach2012optimization}
\begin{barticle}[author]
\bauthor{\bsnm{Bach},~\bfnm{Francis}\binits{F.}},
  \bauthor{\bsnm{Jenatton},~\bfnm{Rodolphe}\binits{R.}},
  \bauthor{\bsnm{Mairal},~\bfnm{Julien}\binits{J.}} \AND
  \bauthor{\bsnm{Obozinski},~\bfnm{Guillaume}\binits{G.}}
(\byear{2012}).
\btitle{Optimization with sparsity-inducing penalties}.
\bjournal{Foundations and Trends{\textregistered} in Machine Learning}
\bvolume{4}
\bpages{1--106}.
\end{barticle}
\endbibitem

\bibitem[\protect\citeauthoryear{Bertsekas et~al.}{2003}]{bertsekas2003convex}
\begin{bbook}[author]
\bauthor{\bsnm{Bertsekas},~\bfnm{Dimitri~P}\binits{D.~P.}},
  \bauthor{\bsnm{Nedi},~\bfnm{Angelia}\binits{A.}},
  \bauthor{\bsnm{Ozdaglar},~\bfnm{Asuman~E}\binits{A.~E.}} \betal{et~al.}
(\byear{2003}).
\btitle{Convex analysis and optimization}.
\bpublisher{Athena Scientific}.
\end{bbook}
\endbibitem

\bibitem[\protect\citeauthoryear{Bonnefoy et~al.}{2014}]{bonnefoy2014}
\begin{binproceedings}[author]
\bauthor{\bsnm{Bonnefoy},~\bfnm{Antoine}\binits{A.}},
  \bauthor{\bsnm{Emiya},~\bfnm{Valentin}\binits{V.}},
  \bauthor{\bsnm{Ralaivola},~\bfnm{Liva}\binits{L.}},
  \bauthor{\bsnm{Gribonval},~\bfnm{R{\'e}mi}\binits{R.}} \betal{et~al.}
(\byear{2014}).
\btitle{A Dynamic Screening test principle for the Lasso}.
In \bbooktitle{European Signal Processing Conference}.
\end{binproceedings}
\endbibitem

\bibitem[\protect\citeauthoryear{Chen et~al.}{2012}]{chen2012smoothing}
\begin{barticle}[author]
\bauthor{\bsnm{Chen},~\bfnm{Xi}\binits{X.}},
  \bauthor{\bsnm{Lin},~\bfnm{Qihang}\binits{Q.}},
  \bauthor{\bsnm{Kim},~\bfnm{Seyoung}\binits{S.}},
  \bauthor{\bsnm{Carbonell},~\bfnm{Jaime~G}\binits{J.~G.}},
  \bauthor{\bsnm{Xing},~\bfnm{Eric~P}\binits{E.~P.}} \betal{et~al.}
(\byear{2012}).
\btitle{Smoothing proximal gradient method for general structured sparse
  regression}.
\bjournal{The Annals of Applied Statistics}
\bvolume{6}
\bpages{719--752}.
\end{barticle}
\endbibitem

\bibitem[\protect\citeauthoryear{Deng, Yin and Zhang}{2013}]{deng2013group}
\begin{barticle}[author]
\bauthor{\bsnm{Deng},~\bfnm{Wei}\binits{W.}},
  \bauthor{\bsnm{Yin},~\bfnm{Wotao}\binits{W.}} \AND
  \bauthor{\bsnm{Zhang},~\bfnm{Yin}\binits{Y.}}
(\byear{2013}).
\btitle{Group sparse optimization by alternating direction method}.
\bjournal{Proceedings of the SPIE}
\bvolume{8858}
\bpages{88580R}.
\end{barticle}
\endbibitem

\bibitem[\protect\citeauthoryear{Ghaoui, Viallon and
  Rabbani}{2012}]{safefule2012}
\begin{barticle}[author]
\bauthor{\bsnm{Ghaoui},~\bfnm{Laurent~El}\binits{L.~E.}},
  \bauthor{\bsnm{Viallon},~\bfnm{Vivian}\binits{V.}} \AND
  \bauthor{\bsnm{Rabbani},~\bfnm{Tarek}\binits{T.}}
(\byear{2012}).
\btitle{Safe feature elimination in sparse supervised learning}.
\bjournal{PaciÞc Journal of Optimization}
\bvolume{8}
\bpages{667Ð698}.
\end{barticle}
\endbibitem

\bibitem[\protect\citeauthoryear{Guyon et~al.}{2004}]{guyon2004result}
\begin{binproceedings}[author]
\bauthor{\bsnm{Guyon},~\bfnm{Isabelle}\binits{I.}},
  \bauthor{\bsnm{Gunn},~\bfnm{Steve}\binits{S.}},
  \bauthor{\bsnm{Ben-Hur},~\bfnm{Asa}\binits{A.}} \AND
  \bauthor{\bsnm{Dror},~\bfnm{Gideon}\binits{G.}}
(\byear{2004}).
\btitle{Result analysis of the nips 2003 feature selection challenge}.
In \bbooktitle{Advances in Neural Information Processing Systems}
\bpages{545--552}.
\end{binproceedings}
\endbibitem

\bibitem[\protect\citeauthoryear{Jacob, Obozinski and
  Vert}{2009}]{jacob2009group}
\begin{binproceedings}[author]
\bauthor{\bsnm{Jacob},~\bfnm{L.}\binits{L.}},
  \bauthor{\bsnm{Obozinski},~\bfnm{G.}\binits{G.}} \AND
  \bauthor{\bsnm{Vert},~\bfnm{J.~P.}\binits{J.~P.}}
(\byear{2009}).
\btitle{Group lasso with overlap and graph lasso}.
In \bbooktitle{Proceedings of the 26th Annual International Conference on
  Machine Learning}
\bpages{433--440}.
\bpublisher{ACM}.
\end{binproceedings}
\endbibitem

\bibitem[\protect\citeauthoryear{Jenatton, Audibert and
  Bach}{2011}]{jenatton2011structured}
\begin{barticle}[author]
\bauthor{\bsnm{Jenatton},~\bfnm{Rodolphe}\binits{R.}},
  \bauthor{\bsnm{Audibert},~\bfnm{Jean-Yves}\binits{J.-Y.}} \AND
  \bauthor{\bsnm{Bach},~\bfnm{Francis}\binits{F.}}
(\byear{2011}).
\btitle{Structured variable selection with sparsity-inducing norms}.
\bjournal{The Journal of Machine Learning Research}
\bvolume{12}
\bpages{2777--2824}.
\end{barticle}
\endbibitem

\bibitem[\protect\citeauthoryear{Kim et~al.}{2012}]{kim2012tree}
\begin{barticle}[author]
\bauthor{\bsnm{Kim},~\bfnm{Seyoung}\binits{S.}},
  \bauthor{\bsnm{Xing},~\bfnm{Eric~P}\binits{E.~P.}} \betal{et~al.}
(\byear{2012}).
\btitle{Tree-guided group lasso for multi-response regression with structured
  sparsity, with an application to eQTL mapping}.
\bjournal{The Annals of Applied Statistics}
\bvolume{6}
\bpages{1095--1117}.
\end{barticle}
\endbibitem

\bibitem[\protect\citeauthoryear{LeCun et~al.}{1998}]{lecun1998gradient}
\begin{barticle}[author]
\bauthor{\bsnm{LeCun},~\bfnm{Yann}\binits{Y.}},
  \bauthor{\bsnm{Bottou},~\bfnm{L{\'e}on}\binits{L.}},
  \bauthor{\bsnm{Bengio},~\bfnm{Yoshua}\binits{Y.}} \AND
  \bauthor{\bsnm{Haffner},~\bfnm{Patrick}\binits{P.}}
(\byear{1998}).
\btitle{Gradient-based learning applied to document recognition}.
\bjournal{Proceedings of the IEEE}
\bvolume{86}
\bpages{2278--2324}.
\end{barticle}
\endbibitem

\bibitem[\protect\citeauthoryear{Lee and Xing}{2012}]{lee2012leveraging}
\begin{barticle}[author]
\bauthor{\bsnm{Lee},~\bfnm{Seunghak}\binits{S.}} \AND
  \bauthor{\bsnm{Xing},~\bfnm{Eric~P}\binits{E.~P.}}
(\byear{2012}).
\btitle{Leveraging input and output structures for joint mapping of epistatic
  and marginal eQTLs}.
\bjournal{Bioinformatics}
\bvolume{28}
\bpages{i137--i146}.
\end{barticle}
\endbibitem

\bibitem[\protect\citeauthoryear{Liu, Ji and Ye}{2009}]{liu2009slep}
\begin{barticle}[author]
\bauthor{\bsnm{Liu},~\bfnm{Jun}\binits{J.}},
  \bauthor{\bsnm{Ji},~\bfnm{Shuiwang}\binits{S.}} \AND
  \bauthor{\bsnm{Ye},~\bfnm{Jieping}\binits{J.}}
(\byear{2009}).
\btitle{{SLEP}: Sparse learning with efficient projections}.
\bjournal{Arizona State University}
\bvolume{6}.
\end{barticle}
\endbibitem

\bibitem[\protect\citeauthoryear{Liu et~al.}{2013}]{liu2013safe}
\begin{binproceedings}[author]
\bauthor{\bsnm{Liu},~\bfnm{Jun}\binits{J.}},
  \bauthor{\bsnm{Zhao},~\bfnm{Zheng}\binits{Z.}},
  \bauthor{\bsnm{Wang},~\bfnm{Jie}\binits{J.}} \AND
  \bauthor{\bsnm{Ye},~\bfnm{Jieping}\binits{J.}}
(\byear{2013}).
\btitle{Safe Screening With Variational Inequalities and Its Applicaiton to
  LASSO}
In \bbooktitle{Proceedings of the 30th International Conference on Machine
  Learning}.
\end{binproceedings}
\endbibitem

\bibitem[\protect\citeauthoryear{Nayar, Nene and
  Murase}{1996}]{nayar1996columbia}
\begin{barticle}[author]
\bauthor{\bsnm{Nayar},~\bfnm{Shree~K}\binits{S.~K.}},
  \bauthor{\bsnm{Nene},~\bfnm{Sammeer~A}\binits{S.~A.}} \AND
  \bauthor{\bsnm{Murase},~\bfnm{Hiroshi}\binits{H.}}
(\byear{1996}).
\btitle{Columbia object image library (coil 100)}.
\bjournal{Department of Comp. Science, Columbia University, Tech. Rep.
  CUCS-006-96}.
\end{barticle}
\endbibitem

\bibitem[\protect\citeauthoryear{Sim, Baker and Bsat}{2002}]{sim2002cmu}
\begin{binproceedings}[author]
\bauthor{\bsnm{Sim},~\bfnm{Terence}\binits{T.}},
  \bauthor{\bsnm{Baker},~\bfnm{Simon}\binits{S.}} \AND
  \bauthor{\bsnm{Bsat},~\bfnm{Maan}\binits{M.}}
(\byear{2002}).
\btitle{The {CMU} pose, illumination, and expression {(PIE)} database}.
In \bbooktitle{Automatic Face and Gesture Recognition, 2002. Proceedings. Fifth
  IEEE International Conference on}
\bpages{46--51}.
\bpublisher{IEEE}.
\end{binproceedings}
\endbibitem

\bibitem[\protect\citeauthoryear{Simon et~al.}{2013}]{simon2013sparse}
\begin{barticle}[author]
\bauthor{\bsnm{Simon},~\bfnm{Noah}\binits{N.}},
  \bauthor{\bsnm{Friedman},~\bfnm{Jerome}\binits{J.}},
  \bauthor{\bsnm{Hastie},~\bfnm{Trevor}\binits{T.}} \AND
  \bauthor{\bsnm{Tibshirani},~\bfnm{Robert}\binits{R.}}
(\byear{2013}).
\btitle{A sparse-group lasso}.
\bjournal{Journal of Computational and Graphical Statistics}
\bvolume{22}
\bpages{231--245}.
\end{barticle}
\endbibitem

\bibitem[\protect\citeauthoryear{Tibshirani}{1996}]{tibshirani1996regression}
\begin{barticle}[author]
\bauthor{\bsnm{Tibshirani},~\bfnm{R.}\binits{R.}}
(\byear{1996}).
\btitle{{Regression shrinkage and selection via the lasso}}.
\bjournal{Journal of the Royal Statistical Society. Series B (Methodological)}
\bvolume{58}
\bpages{267--288}.
\end{barticle}
\endbibitem

\bibitem[\protect\citeauthoryear{Tibshirani
  et~al.}{2012}]{tibshirani2012strong}
\begin{barticle}[author]
\bauthor{\bsnm{Tibshirani},~\bfnm{Robert}\binits{R.}},
  \bauthor{\bsnm{Bien},~\bfnm{Jacob}\binits{J.}},
  \bauthor{\bsnm{Friedman},~\bfnm{Jerome}\binits{J.}},
  \bauthor{\bsnm{Hastie},~\bfnm{Trevor}\binits{T.}},
  \bauthor{\bsnm{Simon},~\bfnm{Noah}\binits{N.}},
  \bauthor{\bsnm{Taylor},~\bfnm{Jonathan}\binits{J.}} \AND
  \bauthor{\bsnm{Tibshirani},~\bfnm{Ryan~J}\binits{R.~J.}}
(\byear{2012}).
\btitle{Strong rules for discarding predictors in lasso-type problems}.
\bjournal{Journal of the Royal Statistical Society: Series B (Statistical
  Methodology)}
\bvolume{74}
\bpages{245--266}.
\end{barticle}
\endbibitem

\bibitem[\protect\citeauthoryear{Wang et~al.}{2013}]{wang2013lasso}
\begin{binproceedings}[author]
\bauthor{\bsnm{Wang},~\bfnm{Jie}\binits{J.}},
  \bauthor{\bsnm{Zhou},~\bfnm{Jiayu}\binits{J.}},
  \bauthor{\bsnm{Wonka},~\bfnm{Peter}\binits{P.}} \AND
  \bauthor{\bsnm{Ye},~\bfnm{Jieping}\binits{J.}}
(\byear{2013}).
\btitle{Lasso screening rules via dual polytope projection}.
In \bbooktitle{Advances in Neural Information Processing Systems}
\bpages{1070--1078}.
\end{binproceedings}
\endbibitem

\bibitem[\protect\citeauthoryear{Xiang and Ramadge}{2012}]{xiang2012fast}
\begin{binproceedings}[author]
\bauthor{\bsnm{Xiang},~\bfnm{Zhen~James}\binits{Z.~J.}} \AND
  \bauthor{\bsnm{Ramadge},~\bfnm{Peter~J}\binits{P.~J.}}
(\byear{2012}).
\btitle{Fast lasso screening tests based on correlations}.
In \bbooktitle{Acoustics, Speech and Signal Processing (ICASSP), 2012 IEEE
  International Conference on}
\bpages{2137--2140}.
\bpublisher{IEEE}.
\end{binproceedings}
\endbibitem

\bibitem[\protect\citeauthoryear{Xiang, Xu and
  Ramadge}{2011}]{xiang2011learning}
\begin{binproceedings}[author]
\bauthor{\bsnm{Xiang},~\bfnm{Zhen~J}\binits{Z.~J.}},
  \bauthor{\bsnm{Xu},~\bfnm{Hao}\binits{H.}} \AND
  \bauthor{\bsnm{Ramadge},~\bfnm{Peter~J}\binits{P.~J.}}
(\byear{2011}).
\btitle{Learning sparse representations of high dimensional data on large scale
  dictionaries}.
In \bbooktitle{Advances in Neural Information Processing Systems}
\bpages{900--908}.
\end{binproceedings}
\endbibitem

\bibitem[\protect\citeauthoryear{Yang et~al.}{2010}]{yang2010identifying}
\begin{barticle}[author]
\bauthor{\bsnm{Yang},~\bfnm{Can}\binits{C.}},
  \bauthor{\bsnm{Wan},~\bfnm{Xiang}\binits{X.}},
  \bauthor{\bsnm{Yang},~\bfnm{Qiang}\binits{Q.}},
  \bauthor{\bsnm{Xue},~\bfnm{Hong}\binits{H.}} \AND
  \bauthor{\bsnm{Yu},~\bfnm{Weichuan}\binits{W.}}
(\byear{2010}).
\btitle{Identifying main effects and epistatic interactions from large-scale
  SNP data via adaptive group Lasso}.
\bjournal{BMC bioinformatics}
\bvolume{11}
\bpages{S18}.
\end{barticle}
\endbibitem

\bibitem[\protect\citeauthoryear{Yuan and Lin}{2006}]{yuan2006model}
\begin{barticle}[author]
\bauthor{\bsnm{Yuan},~\bfnm{M.}\binits{M.}} \AND
  \bauthor{\bsnm{Lin},~\bfnm{Y.}\binits{Y.}}
(\byear{2006}).
\btitle{{Model selection and estimation in regression with grouped variables}}.
\bjournal{The Journal of the Royal Statistical Society, Series B (Statistical
  Methodology)}
\bvolume{68}
\bpages{49}.
\end{barticle}
\endbibitem

\bibitem[\protect\citeauthoryear{Yuan, Liu and Ye}{2011}]{yuan2011efficient}
\begin{binproceedings}[author]
\bauthor{\bsnm{Yuan},~\bfnm{Lei}\binits{L.}},
  \bauthor{\bsnm{Liu},~\bfnm{Jun}\binits{J.}} \AND
  \bauthor{\bsnm{Ye},~\bfnm{Jieping}\binits{J.}}
(\byear{2011}).
\btitle{Efficient methods for overlapping group lasso.}
In \bbooktitle{NIPS}
\bpages{352--360}.
\end{binproceedings}
\endbibitem

\bibitem[\protect\citeauthoryear{Zhang et~al.}{2013}]{zhang2013integrated}
\begin{barticle}[author]
\bauthor{\bsnm{Zhang},~\bfnm{Bin}\binits{B.}},
  \bauthor{\bsnm{Gaiteri},~\bfnm{Chris}\binits{C.}},
  \bauthor{\bsnm{Bodea},~\bfnm{Liviu-Gabriel}\binits{L.-G.}},
  \bauthor{\bsnm{Wang},~\bfnm{Zhi}\binits{Z.}},
  \bauthor{\bsnm{McElwee},~\bfnm{Joshua}\binits{J.}},
  \bauthor{\bsnm{Podtelezhnikov},~\bfnm{Alexei~A}\binits{A.~A.}},
  \bauthor{\bsnm{Zhang},~\bfnm{Chunsheng}\binits{C.}},
  \bauthor{\bsnm{Xie},~\bfnm{Tao}\binits{T.}},
  \bauthor{\bsnm{Tran},~\bfnm{Linh}\binits{L.}},
  \bauthor{\bsnm{Dobrin},~\bfnm{Radu}\binits{R.}} \betal{et~al.}
(\byear{2013}).
\btitle{Integrated systems approach identifies genetic nodes and networks in
  late-onset AlzheimerÕs disease}.
\bjournal{Cell}
\bvolume{153}
\bpages{707--720}.
\end{barticle}
\endbibitem

\bibitem[\protect\citeauthoryear{Zhao, Rocha and Yu}{2009}]{zhao2009composite}
\begin{barticle}[author]
\bauthor{\bsnm{Zhao},~\bfnm{P.}\binits{P.}},
  \bauthor{\bsnm{Rocha},~\bfnm{G.}\binits{G.}} \AND
  \bauthor{\bsnm{Yu},~\bfnm{B.}\binits{B.}}
(\byear{2009}).
\btitle{The composite absolute penalties family for grouped and hierarchical
  variable selection}.
\bjournal{The Annals of Statistics}
\bvolume{37}
\bpages{3468--3497}.
\end{barticle}
\endbibitem

\end{thebibliography}
\end{document}